\newtheorem{thmdef}{Definition}
\newtheorem{thmthm}{Theorem}
\newtheorem{thmasmp}{Assumption}
\newcommand\indep{\protect\mathpalette{\protect\independenT}{\perp}}
\def\independenT#1#2{\mathrel{\rlap{$#1#2$}\mkern2mu{#1#2}}}
\DeclareMathOperator*{\argmin}{arg\,min}
\DeclareMathOperator*{\diag}{diag}
\title{Counterfactual Maximum Likelihood \\ Estimation for Training Deep Networks}
\author{%
  Xinyi Wang, Wenhu Chen, Michael Saxon, William Yang Wang \\
  Department of Computer Science\\
  University of California, Santa Barbara\\
  \texttt{xinyi\_wang@ucsb.edu}, \texttt{wenhuchen@ucsb.edu}, \texttt{saxon@ucsb.edu}, \texttt{william@cs.ucsb.edu}
}
\begin{document}

\maketitle

\begin{abstract}

Although deep learning models have driven state-of-the-art performance on a wide array of tasks, they are prone to spurious correlations that should not be learned as predictive clues. To mitigate this problem, we propose a causality-based training framework to reduce the spurious correlations caused by observed confounders. We give theoretical analysis on the underlying general Structural Causal Model (SCM) and propose to perform Maximum Likelihood Estimation (MLE) on the interventional distribution instead of the observational distribution, namely Counterfactual Maximum Likelihood Estimation (CMLE). As the interventional distribution, in general, is hidden from the observational data, we then derive two different upper bounds of the expected negative log-likelihood and propose two general algorithms, Implicit CMLE and Explicit CMLE, for causal predictions of deep learning models using observational data. We conduct experiments on both simulated data and two real-world tasks: Natural Language Inference (NLI) and Image Captioning. The results show that CMLE methods outperform the regular MLE method in terms of out-of-domain generalization performance and reducing spurious correlations, while maintaining comparable performance on the regular evaluations.\footnote{Our code is released at \url{https://github.com/WANGXinyiLinda/CMLE}.}

\end{abstract}

\section{Introduction} \label{sec:intro}

Deep neural networks have been tremendously successful across a variety of tasks and domains in recent years. However, studies have shown that deep learning models trained with traditional supervised learning framework tend to learn \textit{spurious correlations} as predictive clues \cite{jo2017measuring, Beery_2018_ECCV, poliak-etal-2018-hypothesis, objectHallucination}. For example, in computer vision, deep learning models can rely on surface-level textures \cite{jo2017measuring, geirhos2019imagenettrained} or background environment \cite{Beery_2018_ECCV,ijcai2020-124} instead of the object of interest in images. In natural language processing (NLP), question-answering models are insensitive to the choice of question \cite{kaushik-lipton-2018-much} and natural language inference models are surprisingly accurate at predicting the logical relationship between a pair of sentences from just one of them \cite{poliak-etal-2018-hypothesis}. In image captioning, a phenomenon called \textit{object hallucination} is observed where models tend to include nonexistent objects in a caption based on their common association with other objects that are actually present in the image \cite{objectHallucination}. 

Another related problem in the current associative learning framework, like maximum likelihood estimation (MLE), is that neural networks can achieve almost perfect performance on one dataset but 
dramatically fail to generalize on another because of a naturally occurring or adversarially enforced distribution shift \cite{quinonero2009dataset, szegedy2014intriguing, Ovadia2019CanYT, pmlr-v119-filos20a}. One explanation is that the model learns `fake' features induced by spurious correlations instead of invariant features that would hold in different domains for the same task \cite{arjovsky2020invariant}; this kind of invariance can be explained by the underlying causal mechanism of the task \cite{Peters2015CausalIU}.

It is usually impractical to identify and disentangle all of the potential sources of spurious correlations. Various training algorithms have been proposed to reduce the influence of spurious correlations by learning the underlying causal mechanism. 
One approach for discovering the underlying causal mechanism is by invariant prediction based on the invariance of causal mechanisms across different environments \cite{Peters2015CausalIU, arjovsky2020invariant,chang2020invariant}. However, constructing such diverse environments is usually unrealistic.
Another approach is counterfactual data augmentation \cite{kaushik2020learning, lu2018gender, zmigrod-etal-2019-counterfactual,maudslay2019s, pitis2020counterfactual}, which directly modifies the part of the input that causes the target variable, but usually involves expensive human efforts in generating the counterfactual examples. 
We propose \textbf{Counterfactual Maximum Likelihood Estimation} (CMLE), which tries to perform MLE on an interventional distribution instead of the observational distribution. 
Here, \textbf{obervational distribution} means the distribution that the train data is sampled from. \textbf{Interventional distribution} means the distribution with one or some of the variables intentionally intervened (set to some fixed value). 
More specifically, we aim to remove observable confounders and reduce spurious correlations by directly intervening on the label variable, without requiring human annotators or the curation of diverse environments. 

In this paper, we focus on the setting of predicting the outcome $Y$ of a certain action $T$ on $X$, with the underlying causal model as $X \rightarrow Y \leftarrow T$ and an observed train dataset in the form of $(X,Y,T)$. 
Such a setting enables us to assume that there exist some observed confounders in $X$ that influence both $Y$ and $T$. This means there is potentially a false causal relation from $X$ to $T$, indicated by a red arrow in Figure \ref{fig:orig_scm}. 
In our framework, instead of trying to identify all the confounders, we try to rule out the effect of the confounders by considering the interventional distribution that directly deleting the false causal edge from $X$ to $T$ with a resulting causal graph as shown in Figure \ref{fig:rct_scm}. This causal graph can be viewed as a Randomized Controlled Trial (RCT), which is the gold standard approach for estimating the treatment effect \cite{greenland1999causal}. 


Our method is inspired by individual treatment effect (ITE) prediction \cite{shpitser16identification, shalit2017estimating}, which tries to predict the expected effect $\mathbb{E}[Y_1-Y_0|X=x]$ (e.g. differnce in blood pressure) of a treatment $T \in \{0,1\}$ (e.g. drug/surgery) on a individual unit $X=x$ (e.g. a patient). 
In this case, the observed confounder in $X$ can be explained by selection bias \cite{shalit2017estimating}, which means the treatment $T$ applied to an individual $X$ is dependent on $X$. For example, young patients are more likely to be treated by surgery, while elder patients are more likely to be treated by drugs. 

There are two broad kinds of tasks to which our proposed CMLE framework can apply: relationship prediction tasks that predicts the relation $T$ given two inputs $X$ and $Y$ (e.g. natural language inference \cite{bowman2015large, N18-1101}, paraphrase identification \cite{paws2019naacl,lan2018toolkit}, natural language for visual reasoning \cite{suhr-etal-2017-corpus}, etc.), and conditional generation tasks that generate $Y$ given precondition $X$ and constraint $T$ (e.g. style transfer \cite{gatys2016neural, madaan2020politeness}, controllable image/text generation \cite{mirza2014conditional, DBLP:conf/icml/HuYLSX17, khalifa2021a}, controllable image captioning \cite{chen2021human, deng2020length} etc.). For conditional generation tasks, the causal relation of $X,Y,T$ fits in our setting. For relation prediction tasks, we first assume there is an underlying conditional generation process of $Y$ for a given pair of $X$ and $T$. This is a natural assumption as this is usually how the datasets for this type of task is generated. Then we augment the original dataset using the counterfactual examples generated by our method. In this paper, we try to reduce the spurious correlations contained in both kinds of tasks.

Since we generally cannot observe the interventional distribution, we derive two different upper bounds of the expected interventional negative log-likelihood using only the observational distribution: one is Implicit CMLE, as it does not involve any explicit generation of counterfactual examples; while the other is Explicit CMLE, as it explicitly generates counterfactual examples during the training process. We test our framework with deep neural networks on a simulated dataset and two real-world tasks with well-known spurious correlations: natural language inference \cite{bowman2015large} and image captioning \cite{you2016image}. Compared to regular MLE, we improve the out-of-domain accuracy of NLI on a hard adversarial dataset by 2.9\% relatively
and beat baselines on human preference evaluations by more than 10\%, while maintaining comparable performance on automated evaluations. Our results show that our learning framework can better capture the underlying causal mechanism and reduce spurious correlations without degrading performance. 

\section{Related Work}

A growing body of work has investigated algorithmic improvements for machine learning model training by leveraging underlying causal mechanisms. One line of work tries to utilize the invariance of causal mechanisms across different environments. Invariant Causal Prediction (ICP) \cite{peters2016} aims to learn a linear invariant causal predictor to identify the causal parents of given target variables with data obtained from different environments. Further refinements on this line of work include  nonlinear and nonparametric ICP assessment methods \cite{ghassami2017learning} and Invariant Risk Minimization (IRM) where out-of-distribution generalization is enabled by learning a nonlinear invariant causal predictor shared across different environments \cite{arjovsky2020invariant}.  However, these methods generally require a set of environments that are both sufficiently diverse to eliminate confounders while maintaining the underlying causal structure. Though this can be achieved by intervening in the non-causal variables in a synthetic or interactive setting, it is very difficult to obtain or even define such environments with complex high-dimensional observational data like text or images. Our proposed CMLE framework only requires a single observational dataset, making it much more practical.

Another line of work approaches this problem with data augmentation. Methods including programmatically editing text to address gender bias \cite{lu2018gender,zmigrod-etal-2019-counterfactual,maudslay2019s}, employing crowd workers to augment training data to capture potential unmeasured variables \cite{srivastava2020robustness}, and manually editing minimal text spans to flip the predicted label  \cite{kaushik2020learning} have all been proposed to produce counterfactually augmented data (CAD) for improving out-of-domain generalization. Conceptually, in contrast to the first approach that involves interventions on a potentially large number of non-causal variables, CAD directly intervenes on a target variable \cite{kaushik2021learning}, which is more practical for a specific task. Refinements on this work attempt to better utilize the manually edited or synthetically generated counterfactual data by pairing the factual and counterfactual examples together \cite{teney2020learning,liang-etal-2020-learning}, and leverage local causal structures to generate counterfactual experience in training reinforcement learning algorithms \cite{pitis2020counterfactual}. 
While human annotators can generate high-quality counterfactual examples that are effective in reducing spurious correlations, they are expensive to obtain and do not easily scale. Meanwhile, our proposed CMLE framework is fully automated, requiring no human annotations.

Our CMLE framework is inspired by a technique for predicting the Individual Treatment Effect (ITE) from observational data \cite{shalit2017estimating}. This method only observes one possible outcome of an individual $X=x$ with one binary treatment $T=0$ or $T=1$, and it assumes there are no hidden confounders but potentially observed ones. Assuming a similar underlying causal model as ours ($m=2$) gives an error bound as the sum of the standard generalization error and the distance between the control and treated distributions in a representation space. 
Improvements on this work include disentangling the observable confounders in ITE using representation learning \cite{hassanpour2019learning}, and estimating the causal effects of linguistic properties \cite{pryzant2020causal}.
While in this work we do not aim to separate the confounders from the causal factors in $X$, we try to reduce the spurious correlations caused by the possible confounders contained in $X$ using our proposed CMLE training framework. 
\section{Method} \label{sec:method}

In this section, we first introduce our problem setting and the goal of Counterfactual Maximum Likelihood Estimation (CMLE). As we cannot observe the counterfactual outcomes in observational data, we propose two different ways of estimating the counterfactual likelihood by deriving two different upper bounds: Implicit CMLE, which is faster in training, and Explicit CMLE, which is more flexible. 
We also briefly introduce the deep learning architecture we use in our experiments.

\subsection{Problem Setting} 
\begin{figure}
     \centering
     \begin{subfigure}[b]{0.14\textwidth}
         \centering
         \includegraphics[width=\textwidth]{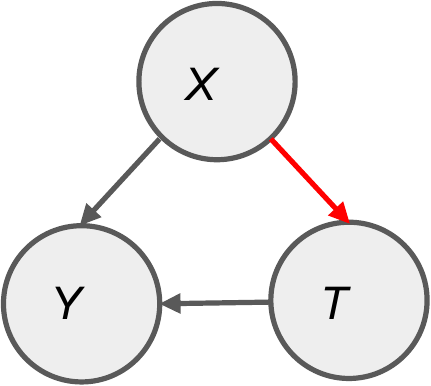}
         \caption{Observed $p^O(X,Y,T)$}
         \label{fig:orig_scm}
     \end{subfigure}
     \hfill
     \begin{subfigure}[b]{0.3\textwidth}
         \centering
         \includegraphics[width=\textwidth]{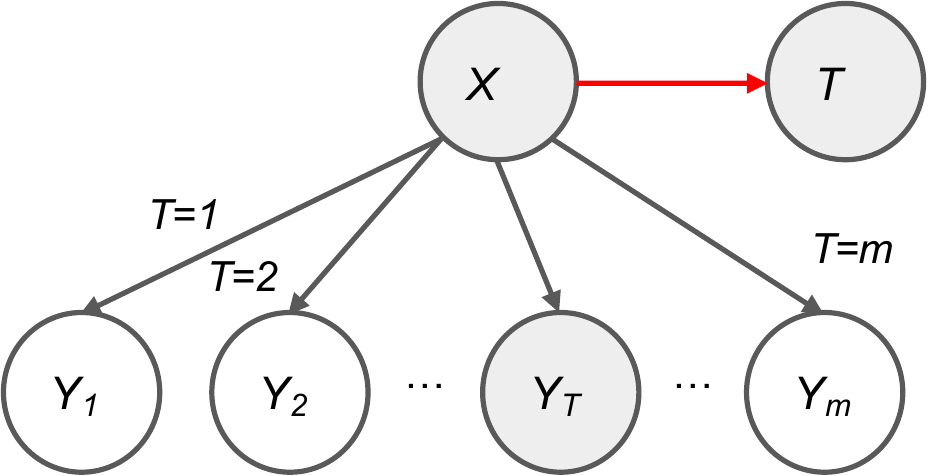}
         \caption{Expand all possible outcomes}
         \label{fig:counterfactual_scm}
     \end{subfigure}
     \hfill
     \begin{subfigure}[b]{0.25\textwidth}
         \centering
         \includegraphics[width=\textwidth]{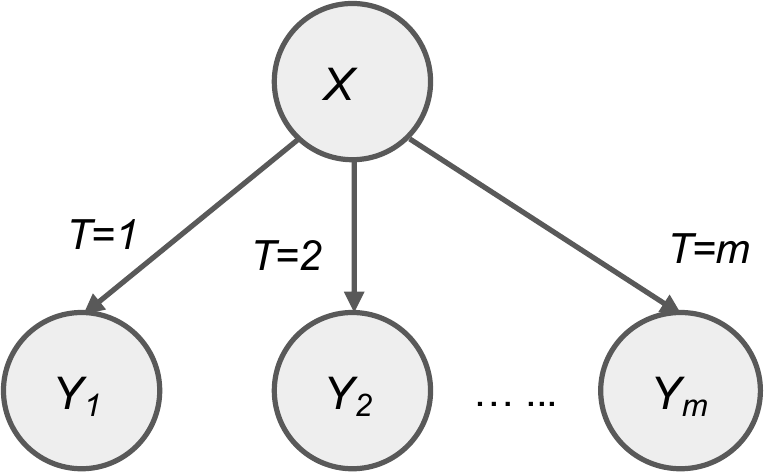}
         \caption{Intervene on $T$}
         \label{fig:interven_scm}
     \end{subfigure}
     \hfill
     \begin{subfigure}[b]{0.14\textwidth}
         \centering
         \includegraphics[width=\textwidth]{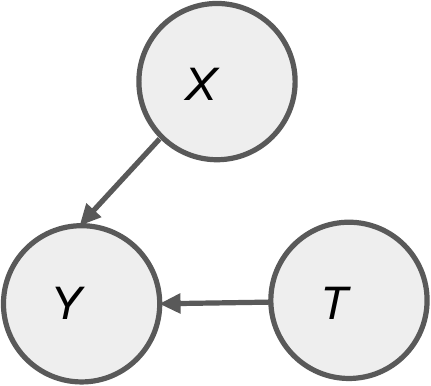}
         \caption{Intervened $p^I(X,Y,T)$}
         \label{fig:rct_scm}
     \end{subfigure}
     \vspace{-1ex}
        \caption{The Structural Causal Model (SCM) assumed by Counterfactual MLE, where (a) and (b) have the same distribution, (c) is the resulting causal graph of intervening on $T$. (d) is the intervened distribution by only removing the $X \rightarrow T$ edge from (a). Shaded nodes means being observed and \textcolor{red}{red arrows} denote the causal relation causing the spurious correlation.}
        \vspace{-4ex}
        \label{fig:scm}
\end{figure}

We consider the scenario with one input variable $X \in \mathcal{X}$ that contain all the confounders, one discrete treatment variable $T \in \{1,2,...,m\}$ where $m \geq 2$, and one outcome variable $Y \in \mathcal{Y}$, as shown in Figure \ref{fig:orig_scm}. Note that $\mathcal{X}$ and $\mathcal{Y}$ can be very complicated high dimensional space like text or image and all three variables are observed in the dataset. Suppose the data we observed is $\mathcal{D} = \{(x_1,t_1,y_1), (x_2,t_2,y_2),...,(x_n,t_n,y_n)\}$ and only $x_i$'s are sampled i.i.d. In this paper, we adopt the causal notations from \cite{pearl2009causality}. Below we give a formal definition of Structural Causal Model (SCM) \cite{pearl2009causality} that we use to model our framework:

\begin{thmdef}\label{def:scm}
    A Structural Causal Model (SCM) is a pair $\langle M, p(u) \rangle$, where $M$ is a triple $\langle U, V, F \rangle$ and $p(u)$ is a probability function defined over the domain of $U$. $U=\{U_1,U_2,...,U_n\}$ is a set of background variables (exogenous) that are determined by factors outside the model. $V=\{V_1,V_2,...,V_n\}$ is a set of variables (endogenous) 
    and $F = \{f_1,f_2,...,f_n\}$ is a set of functions s.t. $V_i = f_i(Pa(V_i), U_i), \; i=1,2,...,n$, where $Pa(V_i) \in V \backslash V_i$ are the parents of $V_i$. We say that ``$V_j$ is a direct cause of $V_i$" if $V_j \in Pa(V_i)$ and illustrate this relation as $V_j \rightarrow V_i$ in a causal graph. 
\end{thmdef}

In Figure \ref{fig:scm}, we omit the exogenous and only show the causal relations between endogenous variables. The shaded nodes indicate that the corresponding variables are observed. Intervention on a specific variable is formalized as fixing the value of this variable to a specific value. This is known as the $do$-operation,
changing the causal graph by removing all edges incident to the intervened variable. We denote the outcome of $Y$ corresponding to $do(T=t)$ as $Y_t$. We expand all the possible outcomes of $Y$ in Figure \ref{fig:counterfactual_scm} while we actually can only observe one outcome for each data point in the observational dataset. 
We also make the following standard assumption \cite{shalit2017estimating, NIPS2017_b2eeb736, johansson2016learning, alaa2017bayesian, yoon2018ganite, NEURIPS2018_a50abba8}:

\begin{thmasmp} \label{asmp:si}
    (Strong Ignorability Condition) $(Y_1,Y_2,...,Y_m) \indep T | X$ and $0<p(T|X)<1$.
\end{thmasmp}

Under this condition, the counterfactual outcomes are identifiable since $p(Y_t = y|X=x) = p(Y = y|X=x, T=t)$. i.e. The counterfactual likelihood can be written as observable conditional probabilities estimated from data.


Our goal is to eliminate the false causal relation from $X$ to $T$ represented by the red arrow in Figure \ref{fig:orig_scm}. To achieve this goal, we define our objective over the interventional distribution $p^I(X, Y, T)$ as shown in Figure \ref{fig:rct_scm}, instead of over the observed data distribution $p^O(X,Y,T)$ as shown in Figure \ref{fig:orig_scm}. To construct the interventional distribution, we only delete the causal edge between $X$ and $T$ and keep all the other causal relationships unchanged. In this distribution, $T$ is uniformly sampled from $\{1,2,...,m\}$. Note that $p^I(Y|X,T) = p^I(Y_T|X) = p^O(Y_T|X) = p^O(Y|X,T)$. As $p(Y_T|X)$ is the same for both observational distribution and interventional distribution, we omit the superscript for simplicity. Then for predicting $Y$, the objective of counterfactual maximum likelihood estimation (CMLE) would be:
\begin{align} \label{eq:L_Y}
    \mathbb{E}_{p^I(X,Y,T)}\big[ -\log{p^I_\theta(Y|X, T)} \big] = \mathbb{E}_X \big[ \frac{1}{m}\sum_{i=1}^m \mathbb{E}_{Y_i|X} [-\log{p_\theta(Y_i|X)}] \big]
\end{align}

Similarly, for predicting $T$, we have:

\begin{align} \label{eq:L_T}
    \mathbb{E}_{p^I(X, Y, T)}\big[ -\log{p^I_\theta(T|X,Y)} \big] = \mathbb{E}_X \big[ \frac{1}{m}\sum_{i=1}^m \mathbb{E}_{Y_i|X} [-\log{p^I_\theta(T=i|X,Y_T)}] \big]
\end{align}

Here $\theta$ is the learnable parameters. In the remaining part of paper, the probabilities without superscripts are default to be observational probabilities. In the following sections, we first focus on deriving two different upper bounds of the CMLE objective for predicting $Y$: Implicit CMLE and Explicit CMLE. In Implicit CMLE, we add a regularizer to balance the distribution of $X$ given $T$ in the representation space. For Explicit CMLE, we directly generate counterfactual examples during training. 
To predict $T$, we train a classifier using normal MLE on a augmented dataset with the generated \textit{counterfactual examples} to estimate the interventional distribution $p^I(X, Y, T)$. We define a counterfactual example as follows:
\begin{thmdef}
    A counterfactual example is an example $(x, t, y)$ with $y$ sampled from $p(Y_t|X=x)$ s.t. $(x, t, y) \notin \mathcal{D}$ and there is an example  $(x, t', y') \in \mathcal{D}$ where $t' \neq t$.
\end{thmdef}

We can rewrite our objective function in Equation \ref{eq:L_Y} as $\mathbb{E}_x\big[ \sum_{t=1}^m \mathcal{L}_\theta(x,t) \big]$, by defining the expected loss function of a specific input and treatment as:

\begin{thmdef}
    We denote the loss function for predicting $Y$ as $L_\theta(x,t,y) = -\log{p_\theta(Y_t=y|X=x)}$. Then the expected loss function given $X=x$ and $T=t$ is $\mathcal{L}_\theta(x,t) = \mathbb{E}_{Y_t|X=x}[L_\theta(x,t,Y_t)]$.
\end{thmdef}

Because of the Strong Ignorability Assumption, we have $\mathcal{L}_\theta(x,t) > 0$. 
Then we define an \textit{expected factual loss} $\epsilon_F^{T=t}$ and \textit{expected counterfactual loss} $\epsilon_{CF}^{T=t}$ when observing a specific treatment $T=t$:

\begin{thmdef} \label{def:cf_loss}
    The expected factual loss and expected counterfactual loss given $T=t$ are $\epsilon_F^{T=t} = \mathbb{E}_{X|T=t}[\mathcal{L}_\theta(X,t)]$ and $\epsilon_{CF}^{T=t} = \mathbb{E}_{X|T \neq t}[\mathcal{L}_\theta(X,t)]$, respectively.
\end{thmdef}
Here $\epsilon_{F}^{T=t}$ measures how well the model performs on the observational distribution while $\epsilon_{CF}^{T=t}$ measures how well the model would perform on an alternative world where a different treatment is chosen. Then we can rewrite our CMLE objective as:
\begin{align} \label{eq:factual}
    \mathbb{E}_x\big[ \sum_{t=1}^m \mathcal{L}_\theta(x,t) \big] 
    = \sum_{t=1}^m [ p(T = t) \epsilon_F^{T=t} + p(T \neq t) \epsilon_{CF}^{T=t}]
\end{align}


\begin{figure}
    \centering
    \includegraphics[width=\textwidth]{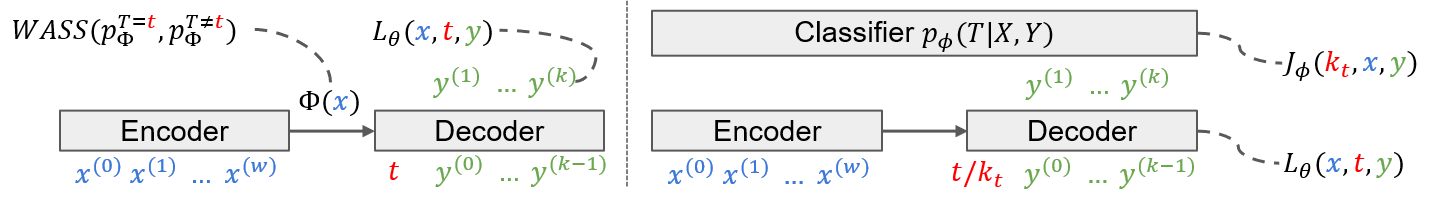}
    \vspace{-4ex}
    \caption{The model architecture of Implicit CMLE (left) and Explicit CMLE (right).
    }
    \vspace{-2ex}
    \label{fig:model}
\end{figure}

\subsection{Implicit CMLE}

We consider learning a representation function of $X$, $\Phi: \mathcal{X} \rightarrow \mathbb{R}^d$, where $\mathbb{R}^d$ is the representation space. Here we make the following assumption about $\Phi$:

\begin{thmasmp} \label{asmp:phi}
    $\Phi: \mathcal{X} \rightarrow \mathbb{R}^d$ is a invertible function. We denote $\Psi: \mathbb{R}^d \rightarrow \mathcal{X}$ as the inverse of $\Phi$. For continuous $\mathcal{X}$, $\Psi$ also needs to be differentiable to calculate the Jacobian matrix. 
\end{thmasmp}

Note that this assumption can be fulfilled for neural networks as long as all the activation functions are invertible. We denote the distribution induced by $\Phi$ by $p_\Phi$. We can easily transform $p(X|T)$ to $p_\Phi(\Phi(X)|T)$ by a standard change of variable \cite{pmlr-v37-rezende15, tran2019discrete}:

\begin{thmdef} \label{def:p_phi}
    For $i \in \{1,2,...,m\}$ and $r \in \mathbb{R}^d$, we denote $p_\Phi^{T=i}(r) := p_\Phi(r|T=i)$. For discrete $\mathcal{X}$, we simply have $p_\Phi^{T=i}(r) = p(X=\Psi(r)|T=i)$, while for continuous $\mathcal{X}$, we have $p_\Phi^{T=i}(r) = p(X=\Psi(r)|T=i) \det \left|\frac{\partial \Psi}{\partial r} \right|$.
\end{thmdef}

We adopt a similar idea as \cite{shalit2017estimating} and use the Integral Probability Metric (IPM) to derive an upper bound of the CMLE objective. IPM is a class of metrics between probability distributions with the following definition \cite{muller1997integral}:

\begin{thmdef} \label{def:ipm}
    For two probability density functions $p$ and $q$ defined over $\mathcal{S}$, we have
    \begin{align*}
        \textsc{IPM}_G(p,q) := \sup_{g \in G}\left| \int_\mathcal{S} g(s)(p(s)-q(s))ds \right|
    \end{align*}
    Where $G$ is a family of functions $g: \mathcal{S} \rightarrow \mathbb{R}$.
\end{thmdef}
Note that here (and in the rest of our paper) we abuse the notation of integration. If $\mathcal{X}$ is discrete, the integration would become a sum over it.
Then we can derive an upper bound for our CMLE objective by Assumption \ref{asmp:phi}, Definition \ref{def:p_phi} and Equation \ref{eq:factual}: (See Appendix \ref{app:icmle_pf} for full proof.)

\begin{thmthm} \label{thm:1}
    For a function family $G$ and a constant $B_{\Phi} > 0$ s.t. $\frac{1}{B_{\Phi}} \mathcal{L}_\theta(\Psi(r),t) \in G$ for any $r \in \mathbb{R}^d$ and $t \in \{1,2,...,m\}$, the CMLE objective is bounded by:
    \begin{align*}
        \mathbb{E}_x\big[ \sum_{t=1}^m \mathcal{L}_\theta(x,t) \big] \leq \sum_{t=1}^m [\epsilon_F^{T=t} + p(T \neq t) B_{\Phi} \textsc{IPM}_G(p_\Phi^{T=t},p_\Phi^{T \neq t}) ]
    \end{align*}
\end{thmthm}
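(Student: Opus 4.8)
The plan is to start from the exact decomposition of the CMLE objective in Equation~\ref{eq:factual}, namely $\mathbb{E}_x\big[\sum_{t=1}^m \mathcal{L}_\theta(x,t)\big] = \sum_{t=1}^m [p(T=t)\,\epsilon_F^{T=t} + p(T\neq t)\,\epsilon_{CF}^{T=t}]$, and bound each term of the sum separately. Since $p(T=t)\le 1$ and $\mathcal{L}_\theta(x,t)>0$ by the Strong Ignorability Assumption, we have $p(T=t)\,\epsilon_F^{T=t}\le \epsilon_F^{T=t}$, which already produces the first part of the claimed bound. So the real work is to control the counterfactual term $p(T\neq t)\,\epsilon_{CF}^{T=t}$ by relating it to the factual term plus an IPM penalty. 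I would not try to bound $\epsilon_{CF}^{T=t}$ directly in terms of $\epsilon_F^{T=t}$; instead I would bound the \emph{difference} $\epsilon_{CF}^{T=t}-\epsilon_F^{T=t}$, which is exactly where the IPM enters.

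The key step is to rewrite both expected losses as integrals over the representation space. By Definition~\ref{def:cf_loss}, $\epsilon_F^{T=t}=\mathbb{E}_{X|T=t}[\mathcal{L}_\theta(X,t)]$ and $\epsilon_{CF}^{T=t}=\mathbb{E}_{X|T\neq t}[\mathcal{L}_\theta(X,t)]$. Using the invertibility of $\Phi$ (Assumption~\ref{asmp:phi}) and the change-of-variables identity from Definition~\ref{def:p_phi}, each of these expectations over $\mathcal{X}$ can be rewritten as an expectation over $\mathbb{R}^d$ with respect to $p_\Phi^{T=t}$ and $p_\Phi^{T\neq t}$ respectively, with integrand $\mathcal{L}_\theta(\Psi(r),t)$: that is, $\epsilon_F^{T=t}=\int_{\mathbb{R}^d}\mathcal{L}_\theta(\Psi(r),t)\,p_\Phi^{T=t}(r)\,dr$ and similarly for the counterfactual loss with $p_\Phi^{T\neq t}$ (the Jacobian factor in the continuous case is precisely what makes this identity hold, and it is absorbed into $p_\Phi$ by definition). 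Subtracting, $\epsilon_{CF}^{T=t}-\epsilon_F^{T=t}=\int_{\mathbb{R}^d}\mathcal{L}_\theta(\Psi(r),t)\big(p_\Phi^{T\neq t}(r)-p_\Phi^{T=t}(r)\big)\,dr$. Now invoke the hypothesis that $\tfrac{1}{B_\Phi}\mathcal{L}_\theta(\Psi(r),t)\in G$: by the definition of the IPM (Definition~\ref{def:ipm}), the integral $\big|\int_{\mathbb{R}^d} g(r)\,(p_\Phi^{T\neq t}(r)-p_\Phi^{T=t}(r))\,dr\big|$ is at most $\textsc{IPM}_G(p_\Phi^{T=t},p_\Phi^{T\neq t})$ for every $g\in G$; applying this with $g=\tfrac{1}{B_\Phi}\mathcal{L}_\theta(\Psi(\cdot),t)$ and multiplying through by $B_\Phi$ gives $\epsilon_{CF}^{T=t}-\epsilon_F^{T=t}\le B_\Phi\,\textsc{IPM}_G(p_\Phi^{T=t},p_\Phi^{T\neq t})$, hence $\epsilon_{CF}^{T=t}\le \epsilon_F^{T=t}+B_\Phi\,\textsc{IPM}_G(p_\Phi^{T=t},p_\Phi^{T\neq t})$.

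Finally I would assemble the pieces: substitute the two bounds back into the decomposition, giving $\sum_{t=1}^m[p(T=t)\,\epsilon_F^{T=t}+p(T\neq t)(\epsilon_F^{T=t}+B_\Phi\,\textsc{IPM}_G(p_\Phi^{T=t},p_\Phi^{T\neq t}))]$; combining the $\epsilon_F^{T=t}$ terms uses $p(T=t)+p(T\neq t)=1$ to get exactly $\sum_{t=1}^m[\epsilon_F^{T=t}+p(T\neq t)B_\Phi\,\textsc{IPM}_G(p_\Phi^{T=t},p_\Phi^{T\neq t})]$, which is the statement. The main obstacle is the change-of-variables step: one must be careful that the Jacobian $\det|\partial\Psi/\partial r|$ appears exactly as in Definition~\ref{def:p_phi} so that $\mathcal{L}_\theta(\Psi(r),t)$ is integrated against the \emph{same} measure whose IPM is being taken, and that the discrete and continuous cases are handled uniformly (in the discrete case the sum over $\mathcal{X}$ becomes a sum over $r=\Phi(x)$ with no Jacobian, consistent with the notational convention the paper adopts). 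Checking that $\mathcal{L}_\theta(\Psi(\cdot),t)$ genuinely lands in the scaled function class $G$ is an assumption of the theorem rather than something to prove, so the only analytic care needed is the measure-theoretic bookkeeping.
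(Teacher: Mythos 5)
Your proposal is correct and follows essentially the same route as the paper: the paper's Lemma A1 bounds $\epsilon_{CF}^{T=t}-\epsilon_F^{T=t}$ by $B_\Phi\,\textsc{IPM}_G(p_\Phi^{T=t},p_\Phi^{T\neq t})$ via exactly the change-of-variables-to-representation-space argument you describe, and the theorem then follows by the same rearrangement of Equation~\ref{eq:factual} using $p(T=t)+p(T\neq t)=1$. The only cosmetic difference is that your opening remark bounding $p(T=t)\,\epsilon_F^{T=t}\le\epsilon_F^{T=t}$ is superfluous, since your final assembly (like the paper's) recombines the factual terms exactly.
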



In this upper bound, we are only left with observational/factual quantities that can be estimated from the observational data. We choose $G$ to be 1-Lipschitz functions, which implies $\textsc{IPM}_G$ to be the Wasserstein distance \cite{villani2008optimal, sriperumbudur2012empirical}, denoted by $\textsc{Wass}(\cdot, \cdot)$. Then the empirical CMLE objective using an unbiased estimator of this upper bound can be written as:

\begin{align} \label{eq:icmle}
    \argmin_\theta \frac{1}{n} \sum_{i=1}^n \frac{n}{u_{t_i}} L_\theta(x_i, t_i, y_i) + \alpha \sum_{j=1}^m  (1-\frac{u_j}{n})\textsc{Wass}(\{\Phi(x_i)\}_{i:t_i=j}, \{\Phi(x_i)\}_{i:t_i \neq j})  
\end{align}

Where $u_i = \sum_{j=1}^n \mathbbm{1}_{t_j=i}$ and $\mathbbm{1}$ is the indicator function. $\alpha$ is a hyperparameter accounting for $B_\Phi$ and we abuse the notation of $\textsc{Wass}(\cdot, \cdot)$ to denote an empirical version of Wasserstein distance. The resulting empirical training objective consists of a weighted version of the normal factual loss for empirical risk minimization and a regularization term to draw $p_\Phi^{T=i}$ and $p_\Phi^{T \neq i}$ closer. $\frac{n}{u_{t_i}}$ accounts for different number of examples with each label and $1-\frac{u_j}{n}$ account for $p(T \neq j)$ (see Theorem \ref{thm:1}). The empirical Wasserstein distance is calculated among examples with different labels in the representation space. In practice, to perform back propagation, we directly compute the gradient of $\textsc{Wass}(p_\Phi^{T=i},p_\Phi^{T \neq i})$ over a single mini-batch, using the algorithm proposed by \cite{cuturi2014fast}. (See Appendix \ref{app:icmle_alg} for details.)

\subsection{Explicit CMLE}

While Implicit CMLE tries to reduce the spurious correlation by learning a better representation of $X$, we consider directly learning a better approximation of the log likelihood $\log{p_\theta(Y_t|X)}$ by explicitly generating counterfactual examples at training. Here we assume that there is a neural network with parameter $\phi$ that approximates the interventional distribution $p^I_\theta(T|Y, X)$ defined as follow:
\begin{thmdef} \label{def:phi}
    $p_\phi(T|Y, X)$ is an approximate of $p^I_\theta(T|Y, X)$ parameterized by $\phi$ s.t. $KL(p^I_\theta(T|X,Y)||p_\phi(T|X,Y)) < B$, where $B > 0$ is a real constant. 
    We also define an associated loss function $J_\phi(t,x,y) = -\log{p_\phi(T=t|Y=y, X=x)}$. 
\end{thmdef}
Here $KL(p||q)$ denotes the KL divergence between two distributions $p$ and $q$. Our intuition is that it is directly feasible to learn the normal MLE objective $\mathbb{E}_{X,Y,T}[\log{p_\phi(T|Y, X)}]$ from the observation data, while is infeasible to directly learn the CMLE objective $\mathbb{E}_x\big[ \sum_{t=1}^m \mathcal{L}_\theta(x,t) \big]$. Also, as $T$ is a categorical variable, its distribution can be simplified as a Multinomial distribution. We then derive an upper bound of the CMLE objective using Bayes' rule and Jensen's Inequality: (See Appendix \ref{app:ecmle_pf} for full proof.)

\begin{thmthm} \label{thm:2}
    By Assumption \ref{asmp:si}, there exist $0<\delta_1<\delta_2<1$ s.t. $\delta_1<p(T|X)<\delta_2$, then we have
    \begin{align*}
        \mathbb{E}_x\big[ \sum_{t=1}^m \mathcal{L}_\theta(x,t) \big]
        \leq  \mathbb{E}_{x,t}\big[ \eta\mathbb{E}_{p_\theta(Y_t|X=x)} [L_\theta(x,t,y)] + \sum_{i \neq t}^m \mathbb{E}_{p_\theta(Y_i|X=x)} [J_\phi(i,x,y)]\big] + \mu
    \end{align*}
    Where $\eta = (1 + \frac{1}{p(t)}(1-\frac{1}{m}))$, $\mu = (m\delta_2 - m\delta_1 - m + 1)\log{m}$.
\end{thmthm}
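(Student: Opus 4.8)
The plan is to keep the exact decomposition of the CMLE objective in Equation~\ref{eq:factual} and replace only the pieces that cannot be estimated from observational data, namely the counterfactual likelihoods $p_\theta(Y_i\mid X)$ for $i$ other than the observed treatment, by the classifier probabilities $p_\phi(T=i\mid X,Y)$. \textbf{Step 1 (decompose).} Since $\sum_{i=1}^m\mathcal{L}_\theta(x,i)$ does not depend on $t$, we may insert the observational $p^O(T\mid X)$ for free and write $\mathbb{E}_x[\sum_{t=1}^m\mathcal{L}_\theta(x,t)] = \mathbb{E}_{x,t\sim p^O}[\mathcal{L}_\theta(x,t)] + \mathbb{E}_{x,t\sim p^O}\big[\sum_{i\neq t}\mathcal{L}_\theta(x,i)\big]$, a factual term plus a counterfactual term; by Assumption~\ref{asmp:si} each $p_\theta(Y_i=y\mid x)$ equals the corresponding observational conditional, so every term is well defined from the model. \textbf{Step 2 (Bayes in the RCT graph).} In Figure~\ref{fig:rct_scm}, $T$ is independent of $X$ and uniform, so $p^I(T=i\mid X=x)=1/m$, and Bayes' rule gives
\begin{align*}
  L_\theta(x,i,y)=-\log p_\theta(Y_i=y\mid x)=-\log p^I_\theta(T=i\mid x,y)-\log m-\log p^I_\theta(Y=y\mid x).
\end{align*}
Summing the $-\log m$ over the $m-1$ counterfactual indices is what produces the $-(m-1)\log m$ part of $\mu$.

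\textbf{Step 3 (Jensen on the marginal).} The last term is handled by writing $p^I_\theta(Y=y\mid x)=\tfrac1m\sum_{j=1}^m p_\theta(Y_j=y\mid x)$ and applying Jensen's inequality to the convex $-\log$: $-\log p^I_\theta(Y=y\mid x)\le \tfrac1m\sum_{j=1}^m L_\theta(x,j,y)$. Once the outer expectation is taken, the $j=t$ summand is an extra factual loss that lifts the factual coefficient to $\eta=1+\tfrac{1}{p(t)}(1-\tfrac1m)$; this is also where the factual loss ends up evaluated under the model's own predictive distribution $p_\theta(Y_t\mid X)$, consistent with the self-generation of counterfactuals in Explicit CMLE. \textbf{Step 4 (introduce $p_\phi$).} Using the Bayes identity of Step~2 in reverse, the counterfactual sum of $-\log p^I_\theta(T=i\mid x,y)$ terms can be recombined into an expectation over $T\sim p^I_\theta(\cdot\mid x,y)$ (up to the $i=t$ term, which folds back into Step~3), after which nonnegativity of $KL(p^I_\theta(T\mid x,y)\,\|\,p_\phi(T\mid x,y))$ (Definition~\ref{def:phi}; only $KL\ge0$, not the constant $B$, is needed here) upgrades it to an expectation of $-\log p_\phi(T\mid x,y)$, i.e.\ to $\sum_{i\neq t}\mathbb{E}_{p_\theta(Y_i\mid x)}[J_\phi(i,x,y)]$.

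\textbf{Step 5 (propensity corrections).} The remaining expectations live over $p(X\mid T=t)$ and $p(X\mid T\neq t)$ rather than the marginal $p(X)$; converting them back to $\mathbb{E}_{x,t}[\cdot]$ contributes the inverse-propensity factor $1/p(t)$ (already folded into $\eta$) together with a residual constant in which each of the $m$ treatments contributes a $\log m$ term scaled by a quantity bounded above and below using $\delta_1<p(T\mid X)<\delta_2$; this yields the $m(\delta_2-\delta_1)\log m$ contribution, so the constants collect to $\mu=(m\delta_2-m\delta_1-m+1)\log m$ and the bound follows.

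The main obstacle is Steps~4--5: reshaping the finite counterfactual sum $\sum_{i\neq t}$ into an honest expectation over $T\sim p^I_\theta(\cdot\mid X,Y)$ so that the KL/Jensen step applies termwise, and then carrying the $\log m$ constants through the propensity reweighting without loss so they collapse to exactly $\mu$ rather than to a looser bound. A secondary subtlety, to be settled together with Step~3, is justifying that the factual loss may legitimately be taken under $p_\theta(Y_t\mid X)$ instead of the true $p(Y_t\mid X)$.
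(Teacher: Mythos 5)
Your outline does track the paper's own strategy: the decomposition of Equation \ref{eq:factual} into factual and counterfactual parts, Bayes' rule under the uniform interventional prior $p^I(T=t\mid X)=\tfrac1m$, Jensen's inequality applied to the mixture marginal $p^I_\theta(Y\mid X)=\tfrac1m\sum_j p_\theta(Y_j\mid X)$ (which is indeed where the extra factual term and hence $\eta=1+\tfrac{1}{p(t)}(1-\tfrac1m)$ come from, with the $\tfrac1{p(t)}$ arising when that term is rewritten as an $\mathbb{E}_{x,t}$), and nonnegativity of $KL(p^I_\theta(T\mid X,Y)\,\|\,p_\phi(T\mid X,Y))$ to replace $p^I_\theta$ by $p_\phi$; your ``secondary subtlety'' about taking the $Y$-expectations under the model is resolved exactly as you suspect, since the paper works with $p^I_\theta$ throughout so that Bayes' rule is an identity for the model's own interventional joint.

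However, what you label ``the main obstacle'' (Steps 4--5) is precisely the substantive content of the theorem, and your sketch neither executes it nor identifies the mechanism that makes it work, so there is a genuine gap. In the paper, the residual term $\xi_1=-\sum_t p(T\neq t)\,\mathbb{E}_{X\mid T\neq t}\mathbb{E}_{Y_t\mid X}\bigl[\log p^I_\theta(T=t\mid X,Y)-\log p_\phi(T=t\mid X,Y)\bigr]$ is converted into an expectation over $T\sim p^I_\theta(\cdot\mid X,Y)$ by a second application of Bayes' rule to the sampling density itself, $p^I_\theta(Y_t=y\mid x)\,p(x\mid T\neq t)\,p(T\neq t)=p^I_\theta(T=t\mid x,y)\,p^I_\theta(x,y)\,\tfrac{p(T\neq t\mid x)}{p^I_\theta(T=t\mid x)}$, so that the factor $m\,p(T\neq t\mid x)\in[m(1-\delta_2),\,m(1-\delta_1)]$ appears explicitly. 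Because the bracketed difference of logs has no definite sign, one cannot bound this factor termwise as you propose; the paper splits the two (individually nonpositive) logarithms, attaches the upper coefficient to $-\log p^I_\theta$ and the lower one to $\log p_\phi$, and then regroups the result as a negative multiple of the KL divergence plus $m(\delta_2-\delta_1)$ times the conditional entropy $H(p^I_\theta(T\mid X,Y))$, dropping the KL term by nonnegativity and bounding the entropy by $\log m$. That KL--entropy decomposition is the sole source of the $m(\delta_2-\delta_1)\log m$ piece of $\mu$; your Step 5 instead attributes it to carrying $\log m$ constants through a propensity reweighting, which is not how the constant arises and, without the sign-aware splitting, the $\delta_1,\delta_2$ bounds cannot legitimately be applied to the integrand. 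Until that change of measure and the KL/entropy argument are actually carried out, the bound with the stated $\eta$ and $\mu$ is not established.
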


This theorem indicates that we can transfer the counterfactual likelihood of predicting $Y$ into the counterfactual likelihood of $T$ predicted by another model. Note that the resulting bound would be tighter when $p_\phi(T|Y, X)$ is a better approximate of $p^I_\theta(T|Y, X)$ as we use the fact $KL(p^I_\theta(T|Y, X)||p_\phi(T|Y, X)) > 0$ in the proof (see Appendix \ref{app:ecmle_pf}). In practice, we can either pretrain a $p_\phi(T|Y, X)$ on the observational data when $p^O_\theta(T|Y, X)$ and $p^I_\theta(T|Y, X)$ are not too far away, or we can simultaneously train a $p_\phi(T|Y, X)$ with $p_\theta(Y_i|X)$ to take advantages of the generated counterfactual examples.

Since it is expensive to take $m-1$ samples of $y_i$ from $p_\theta(Y_i|X)$ for each training example at train time, we introduce another random variable $K_t$ for each $t \in \{1,2,...,m\}$, such that is $K_t$ uniformly distributed on $\{1,2,...,m\} \backslash \{t\}$. Then the Explicit CMLE objective can be rewritten as:
\begin{align*}
    \argmin_\theta \mathbb{E}_{x,t}\Big[ \mathbb{E}_{p_\theta(Y_t|X=x)} \big[L_\theta(x,t,Y_t)] + \frac{m-1}{\eta}\mathbb{E}_{K_t} [\mathbb{E}_{p_\theta(Y_{K_t}|X=x)} [J_\phi(K_t,x,Y_{K_t})] \big] \Big] 
\end{align*}

Here we ignore the constant term $\mu$ in the upper bound and multiply the remaining terms by $\frac{1}{\eta}$.
We can then write the Explicit CMLE objective empirically as:
\begin{align}
    \argmin_\theta \frac{1}{n} \sum_{i=1}^n [L_\theta(x_i, t_i, y_i) + \alpha(t_i) J_\phi(k_{t_i},x_i,y_\theta(x_i,k_{t_i})) ]
\end{align}
Where $\alpha(t_i)$ is a hyperparameter corresponding to $\frac{m-1}{\eta}$, $k_{t_i}$ is randomly sampled from $p(K_{t_i})$ and $y_\theta(x_i,k_{t_i})$ is sampled from $p_\theta(Y_{k_{t_i}}|X=x)$. 
We only sample $K_t$ once and sample the corresponding $Y_{K_t}$ once for computational efficiency. 
In our implementation, to backpropagate through $p_\theta(Y_{k_{t_i}}|X=x)$, we adopt the Gumbel-Softmax approach~\cite{jang2016categorical} to deal with the discrete text data, which creates a differentiable sample to replace the non-differentiable discrete variable. For alternative approaches for discrete and continuous $\mathcal{Y}$, see Appendix \ref{app:ecmle_alg}.


\subsection{CMLE Implementation}


While CMLE is a general causal training framework that can be applied to almost any deep learning architecture, in this paper, we primarily consider the Transformer \cite{vaswani2017attention} architecture, which excels at modeling sequential data and has been successfully applied to various domains, especially with pretraining on a large amount of data \cite{devlin2018bert, tan2019lxmert}. Here we assume for any $x \in \mathcal{X}$, we can write $x$ as $x = \{x^{(0)}, x^{(1)}, ..., x^{(w)}\}$ and for any $y \in \mathcal{Y}$, we have $y = \{y^{(0)}, y^{(1)}, ..., y^{(k)}\}$. We input $T=t$ by prepend a special token/vector corresponding to $t$ as a prefix to the decoder input $y$.
The model architectures of Implicit CMLE and Explicit CMLE are shown in Figure \ref{fig:model}.

\section{Experiments} \label{sec:exp}

We consider one synthetic dataset and two real-world tasks, Natural Language Inference and Image Captioning, to evaluate the effectiveness of our proposed CMLE causal training framework. 
Both real-world tasks have prominent datasets containing well-known spurious correlations; we aim to reduce these correlations' influence on each target task. In the following sections, we will introduce our experiments in details and compare the performance of our proposed CMLE against vanilla MLE.

\subsection{Simulated Experiments}

We consider a synthetic dataset generated by the following procedure inspired by \cite{NIPS2017_b2eeb736}:

Given the followigng functions $g_1(x) = x - 0.5, g_2(x) = (x - 0.5)^2 + 2, g_3(x) = x^2 - 1/3, g_4(x) = - 2 \sin(2x), g_5(x) = e^{-x} - e^{-1} - 1, g_6(x) = e^{-x}, g_7(x) = x^2, g_8(x) = x, g_9(x) = \mathbbm{1}_{x>0}, g_{10}(x) = \cos(x), g_{11}(x,t) = \log(t + x^2), g_{12}(x,t) = e^{t + x}$, and $g_{13}(x,t) = \sin(t + x)$:

\begin{enumerate}
  \vspace{-1ex}
  \item Sample $x_i$ from $\mathcal{N}(0,1)$, $i \in \{1,2,...,100\}$.
  \item Let $s = g_1(x_1) + g_2(x_2) + g_3(x_3) + g_4(x_4) + g_5(x_5)$. Let $t = 1$ if $s \leq 4$; $t = 2$ if $4 < s \leq 5$; $t = 3$ if $s > 5$.
  \item Let $\mu_1 = g_6(x_1) + g_7(x_2) + g_8(x_3) + g_{11}(x_6, t) + g_{12}(x_7, t)$, $\mu_2 = g_9(x_4) + g_{10}(x_5) + g_{11}(x_8) + g_{13}(x_9, t)$. Then sample $y$ from $\mathcal{N}((\mu_1, \mu_2), 1)$.
  \item Repeat steps 1-3 for N times.
  \vspace{-1ex}
\end{enumerate}

Here $X$ is a real vector of length 100, $T$ takes value in $\{1,2,3\}$ and $Y$ is a real vector of length 2. The task is to predict $Y$ given $X$ and $T$. Then we generate 10000 train data, and 5000 validation data and 5000 in distribution (\textit{observational}) test data by setting N=10000 and 5000 respectively. We also generate corresponding ground truth \textit{counterfactual} test data for testing how well our models can estimate the counterfactuals. \textit{OOD1}, \textit{OOD2}, \textit{OOD3} are three out-of-distribution test sets that generates $T$ using different mechanisms as follows:

\begin{itemize}
    \vspace{-1ex}
    \item \textit{OOD1}: $t$ is uniformly sampled from $\{1,2,3\}$.
    \item \textit{OOD2}: Let $s = g_1(x_6) + g_2(x_7) + g_3(x_8) + g_4(x_9) + g_5(x_{10})$. Let $t = 1$ if $s \leq 4$; $t = 2$ if $4 < s \leq 5$; $t = 3$ if $s > 5$.
    \item \textit{OOD3}: Let $s = g_6(x_1) + g_7(x_2) + g_8(x_3) + g_9(x_4) + g_{10}(x_5)$. Let $t = 1$ if $s \leq 4$; $t = 2$ if $4 < s \leq 5$; $t = 3$ if $s > 5$. 
    \vspace{-1ex}
\end{itemize}

We use multi-layer perceptrons for all of our models and use the mean-squared-error (MSE) loss at training. This means we assume $Y$ to follow a Gaussian distribution given $X$ and $T$. In Table \ref{tab:synth_exp}, we report MSE of predicting $Y$ on different test sets. 

\begin{table}[ht]
    \centering
    \begin{tabular}{ccccc}
        \hline
        test set & MLE & Implicit CMLE & Explicit CMLE & Explicit CMLE*\\
        \hline
        Observational	& 3.63±0.20	& 3.57±0.17	& 3.45±0.29	& \textbf{3.28}±0.27	\\
        Counterfactual	& 5.25±0.40	& 5.11±0.38	& 4.77±0.38	& \textbf{4.61}±0.42	\\
        OOD1	& 4.52±0.28	& 4.46±0.37	& 4.31±0.32	& \textbf{3.98}±0.36	\\
        OOD2	& 7.25±0.56	& 7.17±0.56	& 6.94±0.73	& \textbf{6.39}±0.68	\\
        OOD3	& 4.51±0.33	& 4.35±0.36	& 4.15±0.40	& \textbf{3.91}±0.44	\\
        \hline
    \end{tabular}
    \vspace{5pt}
    \caption{MSE of predicting $Y$ given $X$ and $T$. The numbers are averaged over ten runs.}
    \vspace{-4ex}
    \label{tab:synth_exp}
\end{table}

Here Explicit CMLE* means using a $p_\phi(T|X,Y)$ that is trained on the interventional distribution instead of the observational distribution. The numbers are averaged over ten runs and the standard deviation is also reported. We take $\alpha=0.01$ for Implicit CMLE and $\alpha=0.1$ for Explicit CMLE.

We can see that all CMLE methods perform better than MLE on the counterfactual test set and the out-of-distribution test sets, while Explicit CMLE performs better than Implicit CMLE. And the performance of Explicit CMLE would be better when $p_\phi(T|X,Y)$ is closer to the ground truth interventional distribution.

\subsection{Natural Language Inference (NLI)}

Natural language inference (NLI) \cite{bowman2015large} is a fundamental task in natural language understanding, which aims 
to infer the logical relation between a pair of sentences, \textit{premise} and \textit{hypothesis}.
Unfortunately, NLI datasets are known to present significant spurious correlations. For example, \cite{poliak-etal-2018-hypothesis} demonstrated that a machine learning model can non-trivially outperform the majority baseline using only the hypothesis without the premise. 
One known spurious correlation comes from the background knowledge contained in the premises. i.e. The prediction is based on the background knowledge instead of the logical relation between the premise and the hypothesis. For example, the hypothesis ``TV viewing and consumption both begin at about the same time.'' would usually be considered to be true from common knowledge. While a fine-tuned RoBERTa \cite{liu2019roberta} also predict this example as entailment, it contradicts its premise ``Regular TV viewing typically begins between 2 and 3 years of age, consuming about 10.''.
We view the premise sentence as $X$, the hypothesis sentence as $Y$, and the label (entailment, neutral, or contradiction) as $T$. The NLI annotation procedure~\cite{bowman2015large, williams2017broad} naturally fits in our causal framework in ~\autoref{fig:scm}, as the crowd workers are instructed to write a hypothesis that is entailed/neutral/contradicted to a given premise.


 We consider three large-scale NLI datasets: SNLI \cite{bowman2015large}, MNLI \cite{N18-1101} and ANLI \cite{nie-etal-2020-adversarial}. SNLI and MNLI are considered standard benchmarks, with each containing 550,152/10,000/10,000 examples and 392,702/20,000/20,000 examples for train/dev/test split respectively.
ANLI was constructed as a hard adversarial dataset to eliminate spurious correlations and balance the data distribution, thus it serves as a great benchmark to evaluate a model's generalizability. ANLI contains three subsets of increasing difficulty---A1, A2, and A3---with about about 392,702/20,000/20,000 examples, 45,460/1,000/1,000 examples and 100,459/1,200/1,200 examples each for train/dev/test split respectively. 
 

For generating counterfactual hypotheses we fine-tune the pre-trained BART large model \cite{lewis-etal-2020-bart} provided by HuggingFace \cite{wolf-etal-2020-transformers} with 12 encoder and decoder layers. We fine-tune BART on the combined dataset of
SNLI and MNLI using both Implicit 
and Explicit CMLE. 
We choose $\alpha=0.003$ for Implicit CMLE and $\alpha=0.1$ for Explicit CMLE. We predict $p_\phi(T|X,Y)$ for Explicit CMLE with a conventionally fine-tuned RoBERTa large model with 24 layers. We call this model NLI-RoBERTa.

We evaluate how consistent our generated hypotheses $Y$ are to their corresponding labels $T$ using both NLI-RoBERTa and human evaluation\footnote{All of our human evaluations are conducted via Amazon Mechanical Turk. See Appendix \ref{app:mturk} for details.} 
in Table \ref{tab:acc} (right). 
In both cases, we perform this assessment by comparing the model- or human-predicted $t'$ against the ground truth $t$. The accuracy of the NLI-RoBERTa is shown in the first row of Table \ref{tab:acc}. Both of our CMLE methods significantly outperform the MLE baseline by a large margin on the labeling consistency of the generated hypothesis, which indicates that our CMLE framework can generate $Y_t$ that is truly influenced by $T=t$. 

To construct a dataset that mimics sampling from the interventional distribution, we generate two counterfactual examples for each example in the original train dataset of SNLI and MNLI. We filter out the inconsistent examples using NLI-RoBERTa. We generate three augmented datasets using 
standard MLE, Implicit CMLE, and Explicit CMLE and fine-tune a copy of RoBERTa on each one. The results for these models are presented in rows 
\textit{MLE (w/ Aug)}, \textit{Implicit CMLE} and \textit{Explicit CMLE} of Table \ref{tab:acc} respectively. The baseline \textit{MLE} is 
NLI-RoBERTa itself. We test the accuracy of all the fine-tuned RoBERTas simultaneously on the regular test sets (MNLI, SNLI) and the out-of-domain adversarial test sets (ANLI). The results in Table \ref{tab:acc} show that, while maintaining performance on regular test sets, 
our proposed CMLE methods significantly outperform both MLE baselines on the adversarial test set. 
This indicates that CMLE can improve the out-of-domain generalization performance of large deep learning models without hurting the in-domain performance.

\begin{table}[t]
    \small
    \centering
    \begin{tabular}{ccccccccc}
        \toprule
        & \multicolumn{6}{c}{NLI Accuracy} & \multicolumn{2}{c}{Generation Consistency} \\
        Method & A1 & A2 & A3 & ANLI & SNLI & MNLI-m/mm & SNLI (R/H) & MNLI (R/H)\\
        \cmidrule(r){1-1} \cmidrule(r){2-7} \cmidrule(r){8-9}
        MLE &  47.3 & 26.1 & 22.4 & 31.3 & 92.6 & 90.6/90.5 & N/A & N/A\\
        MLE (w/ Aug) &  47.2 & 25.4 & 21.7 & 30.8 & 92.4 & 90.4/90.3 & 65.5/46.8 & 66.0/48.3 \\
        Implicit CMLE &  47.0 & \textbf{27.1} & \textbf{24.0} & \textbf{32.2} & 92.4 & 90.4/90.5 & 88.5/69.6 & 75.1/57.7 \\
        Explicit CMLE &  \textbf{47.7} & 26.8 & 23.7 & \textbf{32.2} & 92.4 & 90.5/90.5 & \textbf{95.2}/\textbf{70.8} & \textbf{85.8}/\textbf{62.4} \\
        \bottomrule
    \end{tabular}
    \vspace{5pt}
    \caption{Accuracy of RoBERTa fine-tuned on augmented SNLI + MNLI (left), along with label consistency (accuracy) of the generated hypothesis (right). We report all the results on test sets except on MNLI we use dev sets, where `m' means matched and `mm' means mismatched. `R' indicates evaluations by NLI-RoBERTa and `H' indicates evaluations by human.}
    \vspace{-4ex}
    \label{tab:acc}
\end{table}

\subsection{Image Captioning}

Image captioning is a fundamental task in studying multimodal interaction. However, image captioning models also suffer severely from spurious correlations~\cite{rohrbach2018object} that lead to hallucinations. This type of spurious correlation is mainly caused by the typical co-occurrences 
of objects in images. For example, as toilets are frequently present in bathroom pictures, a captioning model trained by MLE is very likely to hallucinate a `toilet' whenever it sees a `bathroom', even if no toilet is present.

To reduce these hallucination artifacts, we apply our 
CMLE framework by regarding the image as $X$, the caption as $Y$, and whether the caption is hallucinated as $T$. 
In our setting, the non-hallucinated captions are ground truth captions written by a human, and the hallucinated captions are generated by image captioning models.
The dataset we use is the MSCOCO 2014 dataset \cite{chen2015microsoft} which contains 123,287 images of common objects 
with 5 
human-annotated captions per image. We use the Karpathy \cite{karpathy2015deep} split with 113,287/5,000/5,000 images in the train/validation/test set respectively. Our implementation is based on \cite{luo2018discriminability}.

The Transformer architecture we use directly follows \cite{vaswani2017attention}, with 6 layers in each of the encoder and decoder layers. To construct a dataset with both hallucinated and non-hallucinated captions, we synthetically generate captions by masking out one object word for each caption and use BERT (large) \cite{devlin2018bert} to fill in an alternative word that aligns with the language model. We estimate the caption hallucination by the CHAIR score \cite{rohrbach2018object}, which parses the object words in a caption and compares them to the ground truth objects annotated in MSCOCO images and captions. 
In total, we synthesize 648,550 hallucinated captions 
for 93,004 images in the training set.

Then we train the transformer model on this augmented dataset of MSCOCO captions with synthetic hallucinated captions using regular MLE and our proposed CMLE methods. We choose $\alpha=0.0002$ for Implicit CMLE and $\alpha=0.0001$ for Explicit CMLE. For Explicit CMLE, we fine-tune a pre-trained LXMERT \cite{tan2019lxmert} on this augmented dataset as $p_\phi(T|X,Y)$. Here we consider two MLE baselines: The \textit{MLE} baseline is the Transformer model regularly trained on the original MSCOCO dataset, while the \textit{MLE (w/ Aug)} baseline is trained on the augmented dataset.

We conduct user studies on the generated captions as shown in Table \ref{tab:user}. In the user study, we let the user choose the best caption among our two baselines and two proposed CMLE methods from the following aspects: \textit{Faithfulness}, \textit{Expressiveness} and \textit{Informativeness}. Faithfulness means how faithful is the generated caption to the corresponding image, which partially measures the extend of hallucinations. Expressiveness means if the generated caption is coherent, grammatically, and semantically correct. Informativeness means if the caption includes most of the information in the image instead of giving a general description. We also let the user choose the caption that best describes the image in their opinion, which is referred to as \textit{Preference} in Table \ref{tab:user}.
We can see that our proposed CMLE methods outperform the two MLE baselines with a large margin on all the evaluation dimensions which indicates that our methods can not only reduce the hallucination contents in the generated captions but also increase the quality of generated captions in general.

The automatic evaluation results are shown in Table \ref{tab:cap_auto}, where 
CHAIRs is the percentage of hallucinated sentences and CHAIRi is the percentage of hallucinated instances (objects). We can see that our CMLE methods outperform both MLE baselines on CHAIR scores, which indicates that our methods can effectively reduce hallucination in image captions. At the same time, the regular scores (BLEU@4, METEOR, CIDEr, SPICE) that measure the similarities between the generated captions and ground truth captions, are comparable between CMLE and the \textit{MLE} baseline. We note that the \textit{MLE (w/ Aug)} baseline on augmented data is significantly worse than the \textit{MLE} baseline on both the automatic scores and human evaluations. We conjecture that this is because the model trained by vanilla MLE is not good at finding the causal relation between the hallucination label $T$ and the caption $Y$ and thus would be confused by the mixture of hallucinated captions and non-hallucinated captions.

\begin{table}[t]
    \small
    \centering
    \begin{tabular}{ccccccc}
        \toprule
        Choice (\%) & MLE & MLE (w/ Aug) & Implicit CMLE & Explicit CMLE & Tie\\
        \midrule
        Faithfulness & 20.7 & 13.3 & 28.7 & \textbf{32.0} & 5.3  \\
        Expressiveness & 22.7 & 14.0 & 27.9 & \textbf{32.7} & 2.7 \\
        Informativeness & 21.4 & 14.7 & 27.2 & \textbf{32.7} & 4.0\\
        Preference & 17.4 & 12.6 & 30.6 & \textbf{32.7} & 6.7\\
        \midrule
        Accuracy & 60.2 & 54.3 & 66.9 & \textbf{69.1} & N/A\\
        \bottomrule
    \end{tabular}
    \vspace{5pt}
    \caption{Human evaluations of the quality of generated captions. We ask users to choose the best one among four generated captions according to different criterion. 
    \textit{Accuracy} is obtained by asking the user whether a given caption is accurate to the corresponding image or not. }
    \label{tab:user}
    \vspace{-4ex}
\end{table}

\begin{table}[t]
    \small
    \centering
    \begin{tabular}{cccccc}
        \toprule
        Method & BLEU@4 \cite{papineni2002bleu} & METEOR \cite{banerjee2005meteor} & CIDEr \cite{vedantam2015cider} & SPICE \cite{anderson2016spice} & CAHIRs/CHAIRi \cite{rohrbach2018object} \\
        \midrule
        MLE & 33.1 & 26.9 & 106.7 & 20.0 & 7.3/5.3 \\
        MLE (w/ Aug) &  31.3 & 26.2 & 102.1 & 19.6 & 8.0/5.8 \\
        Implicit CMLE & 32.0 & 26.8 & 105.2 & 20.0 & \textbf{6.8}/\textbf{4.9} \\
        Explicit CMLE &  32.1 & 26.6 & 105.5 & 19.9 & 7.0/5.0 \\
        \bottomrule
    \end{tabular}
    \vspace{5pt}
    \caption{Automatic metrics on MSCOCO dataset for image captioning. All the results are reported on the Karpathy test split. Lower CAHIRs and CHAIRi means less object hallucinations.}
    \label{tab:cap_auto}
    \vspace{-5ex}
\end{table}

\subsection{Discussions and Limitations}

From both NLI and image captioning experiments, we observe that Explicit MLE is significantly better than Implicit MLE in terms of human evaluations of the generated text, and comparable on other automatic evaluations. We conjecture that this is because that Implicit CMLE only incorporates counterfactual constraints into the representation of $X$ while Explicit CMLE directly optimizes $p_\theta(Y_t|X)$ when generating the counterfactual examples. 

We also note that the improvement on automatic metrics is not that large for both real-world tasks. For NLI, our data augmentation cannot perfectly simulate sampling from the interventional distribution as we only augment two counterfactual examples for each factual example. For Image captioning, as it is difficult to simulate the true distribution of hallucinated captions, the models we trained on the augmented data can be affected by these newly introduced artifacts. On the other hand, the automatic metrics like BLEU only measure the similarity of generated sentences with the observed ground truth which cannot demonstrate the full strength of our methods.

\section{Conclusion} \label{sec:conclusion}


Our proposed CMLE framework can be applied to a wide range of learning scenarios with paired inputs and corresponding labels or involving conditional generation. 
While currently we only consider the spurious correlations caused by the observable confounders under the Strong Ignorability assumption, it is often the case that the confounders are not observed. For example, annotators may have a biased pattern of annotation. As we do not record the information of the annotators, this confounder is considered unobserved. Expanding the current CMLE framework by including the unobserved confounders into the causal model would be an interesting and important future direction.

\section*{Acknowledgments}

This work was supported in part by the National Science Foundation Graduate Research Fellowship under Grant No. 1650114. This material is based on work that is partially funded by an unrestricted gift from Google.

\bibliographystyle{nips}
\bibliography{ref}

\appendix

\newtheorem{thmappdef}{Definition}
\renewcommand{\thethmappdef}{A\arabic{thmappdef}}
\newtheorem{thmappasmp}{Assumption}
\renewcommand{\thethmappasmp}{A\arabic{thmappasmp}}
\newtheorem{thmapplem}{Lemma}
\renewcommand{\thethmapplem}{A\arabic{thmapplem}}
\newtheorem{thmappcol}{Corollary}
\renewcommand{\thethmappcol}{A\arabic{thmappcol}}

\newenvironment{thmproof}[1][Proof]{\begin{trivlist}
\item[\hskip \labelsep {\textit{#1.}}]}{\end{trivlist}}
\newenvironment{thmproofsketch}[1][Proof sketch]{\begin{trivlist}
\item[\hskip \labelsep {\textit{#1.}}]}{\end{trivlist}}
\newtheorem{proposition}{Proposition}

\section{Proofs} \label{app:proof}

In this section, we give full proofs of the two main theorems in the paper.

\subsection{Implicit CMLE} \label{app:icmle_pf}


Based on Definition \ref{def:p_phi}, the difference between the counterfactual loss and the factual loss can be bounded by the following Lemma:
\begin{thmapplem} \label{lem:ipm}
    For a function family $G$ and a constant $B_{\Phi} > 0$ s.t. $\frac{1}{B_{\Phi}} \mathcal{L}_\theta(\Psi(r),t) \in G$ for any $r \in \mathbb{R}^d$ and $t \in \{1,2,...,m\}$, we have:
    \begin{align*}
        \epsilon_{CF}^{T=t} - \epsilon_F^{T=t} \leq B_{\Phi} \textsc{IPM}_G(p_\Phi^{T=i},p_\Phi^{T \neq i})
    \end{align*}
\end{thmapplem}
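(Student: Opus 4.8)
\textbf{Proof plan for Lemma \ref{lem:ipm}.}
The plan is to expand both expected losses as integrals against the conditional densities of $X$, take their difference so that $\mathcal{L}_\theta(x,t)$ is integrated against the signed measure $p(x\mid T\neq t)-p(x\mid T=t)$, push everything through the invertible map $\Phi$ via a change of variables, and then recognize the resulting expression as bounded by $\textsc{IPM}_G$ evaluated at $p_\Phi^{T=t}$ and $p_\Phi^{T\neq t}$.

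First I would use Definitions \ref{def:cf_loss} to write
\begin{align*}
\epsilon_{CF}^{T=t}-\epsilon_F^{T=t}
= \int_\mathcal{X} \mathcal{L}_\theta(x,t)\,\big(p(x\mid T\neq t)-p(x\mid T=t)\big)\,dx ,
\end{align*}
with the integral read as a sum when $\mathcal{X}$ is discrete. Next, since $\Phi$ is invertible with inverse $\Psi$ (Assumption \ref{asmp:phi}), I would substitute $x=\Psi(r)$. In the discrete case this is just a relabeling of the summation index and Definition \ref{def:p_phi} directly gives $p(\Psi(r)\mid T=i)=p_\Phi^{T=i}(r)$; in the continuous case the substitution introduces the Jacobian factor $\left|\det\frac{\partial\Psi}{\partial r}\right|$, which is exactly the factor built into $p_\Phi^{T=i}(r)$ in Definition \ref{def:p_phi}, so $\int f(x)p(x\mid T=i)\,dx=\int f(\Psi(r))\,p_\Phi^{T=i}(r)\,dr$ in both cases. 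Applying this to $f=\mathcal{L}_\theta(\cdot,t)$ yields
\begin{align*}
\epsilon_{CF}^{T=t}-\epsilon_F^{T=t}
= \int_{\mathbb{R}^d} \mathcal{L}_\theta(\Psi(r),t)\,\big(p_\Phi^{T\neq t}(r)-p_\Phi^{T=t}(r)\big)\,dr .
\end{align*}

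Finally I would invoke the hypothesis that $\tfrac{1}{B_\Phi}\mathcal{L}_\theta(\Psi(\cdot),t)\in G$: taking this function as a feasible choice of $g$ in the supremum defining $\textsc{IPM}_G$ (Definition \ref{def:ipm}) gives
\begin{align*}
\tfrac{1}{B_\Phi}\Big|\textstyle\int_{\mathbb{R}^d}\mathcal{L}_\theta(\Psi(r),t)\big(p_\Phi^{T=t}(r)-p_\Phi^{T\neq t}(r)\big)\,dr\Big|
\;\le\; \textsc{IPM}_G\big(p_\Phi^{T=t},p_\Phi^{T\neq t}\big),
\end{align*}
and since the left-hand quantity without the absolute value (and with the sign flipped) is precisely $\epsilon_{CF}^{T=t}-\epsilon_F^{T=t}$, bounding the difference by its absolute value and multiplying by $B_\Phi$ closes the proof. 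The only delicate point is the change-of-variables step: I need to be careful that the Jacobian convention in Definition \ref{def:p_phi} matches the one produced by substituting $x=\Psi(r)$, and that the same manipulation is valid in the discrete case where it degenerates to a reindexing of the sum; everything else is a direct application of the IPM definition.
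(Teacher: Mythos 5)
Your proposal is correct and follows essentially the same route as the paper's proof: expand $\epsilon_{CF}^{T=t}-\epsilon_F^{T=t}$ as an integral of $\mathcal{L}_\theta(x,t)$ against $p(x\mid T\neq t)-p(x\mid T=t)$, push through $\Psi$ (Jacobian in the continuous case, reindexing in the discrete case, both absorbed by Definition \ref{def:p_phi}), and then use $\frac{1}{B_\Phi}\mathcal{L}_\theta(\Psi(\cdot),t)\in G$ as a feasible $g$ in the supremum defining $\textsc{IPM}_G$. Your extra care about the absolute value and the Jacobian convention matches what the paper implicitly does, so there is no gap.
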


\begin{proof}
By Definition \ref{def:cf_loss}, we have
\begin{align*}
    \epsilon_{CF}^{T=t} - \epsilon_F^{T=t} &= \mathbb{E}_{X|T \neq t}[\mathcal{L}_\theta(X,t)] - \mathbb{E}_{X|T = t}[\mathcal{L}_\theta(X,t)]
\end{align*}
For continuous $\mathcal{X}$, we have:
\begin{align}
    \epsilon_{CF}^{T=t} - \epsilon_F^{T=t} &= \int_\mathcal{X} \mathcal{L}_\theta(x,t) (p(X=x|T \neq t) - p(X=x|T = t)) dx \nonumber \\
    &= \int_{\mathbb{R}^d} \mathcal{L}_\theta(\Psi(r),t) (p(X=\Psi(r)|T \neq t) - p(X=\Psi(r)|T = t)) \det \left|\frac{\partial \Psi}{\partial r} \right| dr \label{eq:change_vr} \\
    &= \int_{\mathbb{R}^d} \mathcal{L}_\theta(\Psi(r),t) (p_\Phi^{T \neq i}(r) - p_\Phi^{T=i}(r)) \label{eq:lem1} dr
\end{align}

Where equality \ref{eq:change_vr} is the standard change of variable and equality \ref{eq:lem1} follows from Definition \ref{def:p_phi}.

For discrete $\mathcal{X}$ we have:
\begin{align}
    \epsilon_{CF}^{T=t} - \epsilon_F^{T=t} &= \sum_\mathcal{X} \mathcal{L}_\theta(x,t) (p(X=x|T \neq t) - p(X=x|T = t)) \nonumber \\
    &= \int_{\mathbb{R}^d} \mathcal{L}_\theta(\Psi(r),t) (p(X=\Psi(r)|T \neq t) - p(X=\Psi(r)|T = t)) dr \label{eq:change_vr_d} \\
    &= \int_{\mathbb{R}^d} \mathcal{L}_\theta(\Psi(r),t) (p_\Phi^{T \neq i}(r) - p_\Phi^{T=i}(r)) \label{eq:lem1_d} dr
\end{align}

Where equality \ref{eq:change_vr_d} follows from $\Psi$'s invertibility and equality \ref{eq:lem1_d} follows from Definition \ref{def:p_phi}. Then for both continuous and discrete $\mathcal{X}$ we have:

\begin{align}
    \epsilon_{CF}^{T=t} - \epsilon_F^{T=t} &= B_{\Phi} \int_{\mathbb{R}^d} \frac{1}{B_{\Phi}} \mathcal{L}_\theta(\Psi(r),t) (p_\Phi^{T \neq i}(r) - p_\Phi^{T=i}(r)) dr \label{leq:sup} \\
    &\leq B_{\Phi} \sup_{g \in G}\left| \int_{\mathbb{R}^d} g(r)(p_\Phi^{T \neq i}(r) - p_\Phi^{T=i}(r))dr \right| \label{eq:ipm} \\
    &= B_{\Phi} \textsc{IPM}_G(p_\Phi^{T=t},p_\Phi^{T \neq t})
\end{align}

Where inequality \ref{leq:sup} follows from the definition of supremum and equality \ref{eq:ipm} follows from $\frac{1}{B_{\Phi}} \mathcal{L}_\theta(\Psi(r),t) \in G$ and the definition of IPM (Definition \ref{def:ipm}).
\end{proof}

Then based on Lemma \ref{lem:ipm}, we can give the following proof for Theorem \ref{thm:1}:
\begin{proof}
\begin{align} 
    \mathbb{E}_x\big[ \sum_{t=1}^m \mathcal{L}_\theta(x,t) \big] 
    &= \sum_{t=1}^m [ p(T = t) \epsilon_F^{T=t} + p(T \neq t) \epsilon_{CF}^{T=t}] \label{eq:def} \\
    &= \sum_{t=1}^m [\epsilon_F^{T=t} + p(T \neq t) (\epsilon_{CF}^{T=t} - \epsilon_F^{T=t})] \label{eq:simple} \\
    &\leq \sum_{t=1}^m [\epsilon_F^{T=t} + p(T \neq t) B_{\Phi} \textsc{IPM}_G(p_\Phi^{T=t},p_\Phi^{T \neq t}) ] \label{leq:lem}
\end{align}

Where equality \ref{eq:def} follows from Equation \ref{eq:factual}, equality \ref{eq:simple} follow from $p(T \neq t) = 1 - p(T=t)$ and inequality \ref{leq:lem} follows from Lemma \ref{lem:ipm} we just proved.
\end{proof}

\subsection{Explicit CMLE}\label{app:ecmle_pf}

Here we give the proof of Theorem \ref{thm:2}:

\begin{proof}

We first consider expanding the counterfactual part of the CMLE objective in~\autoref{eq:factual} for predicting $Y$ by the Bayes rule and plug in $p_\phi(T|X,Y)$:

\begin{align*}
    \sum_{t=1}^m p(T \neq t) \epsilon_{CF}^{T=t} =& \sum_{t=1}^m p(T \neq t) \mathbb{E}_{X|T \neq t}[\mathcal{L}_\theta(X,t)] \\
    =& \sum_{t=1}^m p(T \neq t) \mathbb{E}_{X|T \neq t}[\mathbb{E}_{Y_t|X=x}[L_\theta(x,t,Y_t)]] \\
    =& \sum_{t=1}^m p(T \neq t) \mathbb{E}_{X|T \neq t}[\mathbb{E}_{Y_t|X=x}[-\log{p_\theta(Y_t=y|X=x)}]] \\
    =& \sum_{t=1}^m p(T \neq t) \mathbb{E}_{X|T \neq t} \big[ \mathbb{E}_{Y_t|X} [-\log{p_\phi(T=t|Y, X)}\\ 
    &- \log{p^I_\theta(T=t|X,Y)} + \log{p_\phi(T=t|X,Y)} \\
    &-\log{\sum_{i=1}^m p^I_\theta(Y_i|X)p^I_\theta(T=i|X)} + \log{p^I_\theta(T=t|X)}] \big]
\end{align*}

Then we let
\begin{align*}
    \xi_0 =& \sum_{t=1}^m p(T \neq t) \mathbb{E}_{X|T \neq t} \big[ \mathbb{E}_{Y_t|X} [J_\phi(t,X,Y_t)]\big]\\
    \xi_1 =&  -\sum_{t=1}^m p(T \neq t) \mathbb{E}_{X|T \neq t} \big[ \mathbb{E}_{Y_t|X} [ \log{p^I_\theta(T=t|X,Y)} - \log{p_\phi(T=t|X,Y)}] \big]\\
    \xi_2 =& \sum_{t=1}^m p(T \neq t) \mathbb{E}_{X|T \neq t} \big[ \mathbb{E}_{Y_t|X} [ -\log{\sum_{i=1}^m p^I_\theta(Y_i|X)p^I_\theta(T=i|X)} + \log{p^I_\theta(T=t|X)}] \big]
\end{align*}
Where $J_\phi$ is defined in Definition \ref{def:phi}. Then we can rearrange or derive an upper bound for each of these three terms as follow: (note that we abuse the integral as the summation for the discrete case)

\begin{align*}
    \xi_0 =& \sum_{t=1}^m p(T \neq t) \mathbb{E}_{X|T \neq t} \big[ \mathbb{E}_{Y_t|X} [J_\phi(t,X,Y_t)]\big]\\
    =& \sum_{t=1}^m p(T \neq t) \int_\mathcal{X} p(X=x|T \neq t) \big[ \mathbb{E}_{Y_t|X=x} [J_\phi(t,x,Y_t)]\big] dx \\
    =& \sum_{t=1}^m \int_\mathcal{X} p(X=x,T \neq t) \big[ \mathbb{E}_{Y_t|X=x} [J_\phi(t,x,Y_t)]\big] dx \\
    =& \sum_{t=1}^m \int_\mathcal{X} \sum_{i \neq t}^m p(T=i,X=x) \big[ \mathbb{E}_{Y_t|X=x} [J_\phi(t,x,Y_t)]\big] dx \\ 
    =& \sum_{t=1}^m \int_\mathcal{X} \Big[\sum_{i=1}^m p(T=i,X=x) \big[ \mathbb{E}_{Y_t|X=x} [J_\phi(t,x,Y_t)]\big] -  p(T=t,X=x) \big[ \mathbb{E}_{Y_t|X=x} [J_\phi(t,x,Y_t)]\big]\Big] dx \\ 
    =& \sum_{i=1}^m \int_\mathcal{X} p(T=i,X=x) \sum_{t=1}^m \big[ \mathbb{E}_{Y_t|X=x} [J_\phi(t,x,Y_t)]\big] dx \\
    &- \sum_{t=1}^m \int_\mathcal{X} p(T=t,X=x) \big[ \mathbb{E}_{Y_t|X=x} [J_\phi(t,x,Y_t)]\big] dx\\
    =& \sum_{t=1}^m \int_\mathcal{X} p(T=t,X=x) \sum_{i=1}^m \big[ \mathbb{E}_{Y_i|X=x} [J_\phi(i,x,Y_i)]\big] dx \\
    &- \sum_{t=1}^m \int_\mathcal{X} p(T=t,X=x) \big[ \mathbb{E}_{Y_t|X=x} [J_\phi(t,x,Y_t)]\big] dx\\
    =& \sum_{t=1}^m \int_\mathcal{X} p(T=t,X=x) \sum_{i \neq t}^m \big[ \mathbb{E}_{Y_i|X=x} [J_\phi(i,x,Y_i)]\big] dx \\
    =& \mathbb{E}_{x,t}\big[ \sum_{i \neq t}^m \mathbb{E}_{Y_i|X=x} [J_\phi(i,x,Y_i)] \big]
\end{align*}

Let $z = \log{p^I_\theta(T=t|X,Y)} - \log{p_\phi(T=t|X,Y)}$, $KL(p||q)$ denotes the KL divergence between two distributions $p$ and $q$, $H(p)$ denote the entropy of a distribution $p$. Then we have:

\begin{align}
    &\xi_1 =  -\sum_{t=1}^m p(T \neq t) \mathbb{E}_{X|T \neq t} \big[\mathbb{E}_{Y_t|X} [z] \big] \nonumber \\
    =& -\sum_{t=1}^m \int_\mathcal{Y} \int_\mathcal{X} z p^I_\theta(Y_t=y|X=x) p_\theta(X=x|T \neq t)p_\theta(T \neq t) dx dy \nonumber \\
    =& -\sum_{t=1}^m \int_\mathcal{Y} \int_\mathcal{X} z p^I_\theta(T=t|X=x, Y=y) p^I_\theta(X=x, Y=y) \frac{p_\theta(T \neq t | X=x)}{p^I_\theta(T=t | X=x)} dx dy \nonumber \\
    \leq&  - m(1-\delta_1)\sum_{t=1}^m \int_\mathcal{Y} \int_\mathcal{X} \log{p^I_\theta(T=t|X=x,Y=y)} p^I_\theta(T=t|X=x, Y=y) p^I_\theta(X=x, Y=y) dx dy \nonumber \\
    &+ m(1-\delta_2) \sum_{t=1}^m \int_\mathcal{Y} \int_\mathcal{X} \log{p_\phi(T=t|X=x,Y=y)} p^I_\theta(T=t|X=x, Y=y) p^I_\theta(X=x, Y=y) dx dy \label{ineq:frac}\\
    =& \mathbb{E}_{p^I(X,Y)} \big[ m\sum_{t=1}^m [-(1-\delta_1) \log{p^I_\theta(T=t|X,Y)} + (1-\delta_2)\log{p_\phi(T=t|X,Y)}] p^I_\theta(T=t|X, Y) \big] \nonumber \\
    =& m\mathbb{E}_{p^I(X,Y)} \big[ -(1-\delta_1)\mathbb{E}_{p^I_\theta(T|X, Y)}[\log{p^I_\theta(T=t|X,Y)} - \log{p_\phi(T=t|X,Y)}] \nonumber \\
    &+ (\delta_2 - \delta_1)\mathbb{E}_{p^I_\theta(T|X, Y)}[-\log{p^I_\theta(T=t|X,Y)}] \big] \nonumber\\
    =& m\mathbb{E}_{p^I(X,Y)} \big[ -(1-\delta_1)KL(p^I_\theta(T|X,Y)||p_\phi(T|X,Y)) + (\delta_2 - \delta_1) H(p^I_\theta(T|X,Y)) \big] \nonumber\\
    \leq& 
    m(\delta_2 - \delta_1)\log{m}\label{ineq:entropy}
\end{align}

Where inequality (\ref{ineq:frac}) follows from $1-\delta_2 \leq p_\theta(T \neq t | X=x) \leq 1-\delta_1$ and the negativity of log probability. Inequality (\ref{ineq:entropy}) follows from Definition \ref{def:phi} and the fact that the largest possible value of entropy of a discrete distribution with $m$ possible value is $\log{m}$ (can be proven by Jensen's inequality).

By Jensen's inequality, we have:

\begin{align*}
    \xi_2 =& \sum_{t=1}^m p(T \neq t) \mathbb{E}_{X|T \neq t} \big[ \mathbb{E}_{Y_t|X} [ -\log{m\frac{1}{m}\sum_{i=1}^m p^I_\theta(Y_i|X)p^I_\theta(T=i|X)} + \log{p^I_\theta(T=t|X)}] \big] \\
    \leq& \sum_{t=1}^m p(T \neq t) \mathbb{E}_{X|T \neq t} \big[ \mathbb{E}_{Y_t|X} [ -\frac{1}{m}\sum_{i=1}^m [\log{p^I_\theta(Y_i|X)} + \log{p^I_\theta(T=i|X)} + \log{m}] + \log{p^I_\theta(T=t|X)}] \big] \\
    =& \mathbb{E}_{X,Y} \big[ \sum_{t=1}^m p(T \neq t|X, Y) [ -\frac{1}{m}\sum_{i=1}^m [\log{p^I_\theta(Y_i|X)} - \log{m} + \log{m}] - \log{m}] \big] \\
    =& \mathbb{E}_{X,Y} \big[ \sum_{t=1}^m [ -\frac{m-1}{m} [\log{p_\theta(Y_t|X)}] - p(T \neq t|X, Y)\log{m}] \big] \\
    =& \frac{m-1}{m} \mathbb{E}_{x,t} \big[ \frac{1}{p(t)}\mathbb{E}_{Y_t|X=x} [L_\theta(x,t,Y_t)]\big] - (m-1)\log{m}  \\
\end{align*}

Where we use the fact $p^I(T|X) = \frac{1}{m}$.

Then by adding $\xi_0$, $\xi_1$ and $\xi_2$ together, we get:
\begin{align*}
    &\sum_{t=1}^m p(T \neq t) \epsilon_{CF}^{T=t} = \xi_0 + \xi_1 + \xi_2 \\
    \leq& \mathbb{E}_{x,t}\big[ \sum_{i \neq t}^m \mathbb{E}_{Y_i|X=x} [J_\phi(i,x,Y_i)] \big] + m(\delta_2 - \delta_1)\log{m}\\
    &+ \frac{m-1}{m} \mathbb{E}_{x,t} \big[ \frac{1}{p(t)}\mathbb{E}_{Y_t|X=x} [L_\theta(x,t,Y_t)]\big] - (m-1)\log{m} \\
    =& \mathbb{E}_{x,t} \big[\frac{1}{p(t)}(1-\frac{1}{m})\mathbb{E}_{Y_t|X=x} [L_\theta(x,t,Y_t)] + \sum_{i \neq t}^m \mathbb{E}_{Y_i|X=x} [J_\phi(i,x,Y_i)] \big] + (m\delta_2 - m\delta_1 + 1 - m)\log{m}
\end{align*}

Then by Definition \ref{def:cf_loss}, the whole CMLE objective would be bounded by:
\begin{align*}
    \mathbb{E}_x\big[ \sum_{t=1}^m \mathcal{L}_\theta(x,t) \big] 
    &= \sum_{t=1}^m [ p(T = t) \epsilon_F^{T=t} + p(T \neq t) \epsilon_{CF}^{T=t}] \\
    &= \mathbb{E}_{x,t}\big[\mathbb{E}_{Y_t|X=x} [L_\theta(x,t,Y_t)]\big] + \sum_{t=1}^m p(T \neq t) \epsilon_{CF}^{T=t} \\
    &\leq \mathbb{E}_{x,t}\big[(1 + \frac{1}{p(t)}(1-\frac{1}{m})) \mathbb{E}_{Y_t|X=x} [L_\theta(x,t,Y_t)] + \sum_{i \neq t}^m \mathbb{E}_{Y_t|X=x} [J_\phi(t,x,Y_t)]\big] \\
    &+ (m\delta_2 - m\delta_1 - m + 1)\log{m}
\end{align*}
\end{proof}


\section{Algorithm details}

In this section, we give more details of the algorithms we used in the paper.

\subsection{Implicit CMLE} \label{app:icmle_alg}

We adopt the same algorithm for computing the stochastic gradient of the Wasserstein distance as in \cite{shalit2017estimating} as shown in Algorithm \ref{alg:wassgrad}, where $\diag{(v)}$ denotes the square matrix with vector $v$ as the diagonal and $\left< M, N \right>$ denotes the dot product with two flattened matrices $M$ and $N$. More specifically, we adopt Algorithm 3 from \cite{cuturi2014fast}.

\begin{algorithm}[H]
    \caption{Computing the stochastic gradient of the Wasserstein distance}
    \SetAlgoLined
    \textbf{Input:} A random mini-batch sampled from the observation data and a representation function $\Phi_{\bf{w}}$ with current parameter $\bf{w}$. For each $i \in \{1,2,...,m\}$, there are $n_i$ examples $(x_{s^{(i)}_1},i,y_{s^{(i)}_1}), \ldots , (x_{s^{(i)}_{n_i}},i,y_{s^{(i)}_{n_i}})$ with $t=i$, and $B = \sum_{i=1}^m n_i$. Let $p_i = \frac{u_i}{n}$ as defined in Equation \ref{eq:icmle}\;
    \For{$i \in \{1,2,...,m\}$}{
        Calculate the pairwise L2 distance matrix $M^{(i)}(\Phi_{\bf{w}}) \in \mathbb{R}^{n_i \times (B-n_i)}$ between all examples with $t=i$ and $t \neq i$: $M^{(i)}_{kl}(\Phi_{\bf{w}}) = \|\Phi_{\bf{w}}(x_{s^{(i)}_k}) - \Phi_{\bf{w}}(x_{s^{(\Tilde{i})}_l})\|$\;
        Let $M = M^{(i)}(\Phi_{\bf{w}})$, $\lambda = 10/mean(M)$, $a = (p_i,...,p_i) \in \mathbb{R}^{n_i}$ and $b = (1-p_i,...,1-p_i) \in \mathbb{R}^{B-n_i}$ \;
        Compute $K = \exp{(-\lambda M)}$ and $\Tilde{K} = \diag{(1/a)}K$ \;
        Let $u=a$. Then \Repeat{10 times}{
        $u = 1/(\Tilde{K}(b/K^Tu))$\;}
    Let $v = b/(K^Tu)$ and calculate the approximate optimal transport matrix $T^* = \diag{(u)}K\diag{(v)}$ \;
    Calculate the gradient for back propagation: $g_i = \nabla_{\bf{w}} \left< T^*,M^{(i)}(\Phi_{\bf{w}})\right>$ \;}
    \label{alg:wassgrad}
\end{algorithm}

In our implementation, we directly use $\left< T^*,M^{(i)}(\Phi_{\bf{w}})\right>$ as an approximate of the $\textsc{Wass}(\cdot,\cdot)$ term in the empirical objective function in Equation \ref{eq:icmle}.

\subsection{Explicit CMLE} \label{app:ecmle_alg}

we separately consider the continuous case and the discrete case of $\mathcal{Y}$. For continuous $Y \in \mathcal{Y}$, we can use a reparameterization trick as proposed in \cite{kingma2013auto} to separate an auxiliary noise variable $\Delta$ from $Y$ as $Y = g_\theta(X, T, \Delta)$. 
Then for each example, we only need to sample the noise $\Delta$ and use $g_\theta$ to get a sample of $Y$ to perform backpropagation on $\theta$. 

For the discrete case, we can directly sample a $y$ from $p_\theta(Y_t|X=x)$ and then use the REINFORCE algorithm \cite{williams1992simple} to rewrite the gradient of the second term as $\nabla_\theta \mathbb{E}_{p_\theta(Y_i|X=x)} [J_\phi(i,x,y)] = \mathbb{E}_{p_\theta(Y_i|X=x)} [-J_\phi(i,x,y) \nabla_\theta L_\theta(x,t,y)]$. In the paper, we adopt the Gumbel-Softmax approach~\cite{jang2016categorical} to deal with the discrete text data, which creates a differentiable sample to replace the non-differentiable discrete variable. More specifically, we substitute a token $y$ sampled from a Multinomial distribution with $p(y=i) = \pi_i$ for $i \in \{1,2,...,V\}$ and $\sum_{i=1}^V \pi_i = 1$ by a continuous vector $z \in \mathbb{R}^V$, with:
\begin{align*}
    z_i = \frac{\exp{((\log{\pi_i} + g_i)/\tau)}}{\sum_{j=1}^V \exp{((\log{\pi_j} + g_j)/\tau)}}
\end{align*}
Where $g_1,g_1,...,g_V$ are sampled i.i.d. from Gumbel(0,1). $\tau$ is the Softmax temperature controlling the sharpness of the sample $z$ across the $V$ categories. For a more detailed discussion of all three methods mentioned above, see \cite{jang2016categorical}.

\section{Experiment Details} \label{app:exp}

Our code is written with PyTorch. We use the same data split as the original dataset or as stated in Section \ref{sec:exp} and we choose our hyperparameters by the validation performance on the dev sets.

\subsection{Natural Language Inference}\label{app:nli}

\textbf{Datasets}: The SNLI dataset is licensed under a Creative Commons Attribution-ShareAlike 4.0 International License. The majority of the MNLI corpus is released under the OANC’s license, and the whole MNLI dataset is in the public domain in the United States. ANLI is licensed under Creative Commons-Non Commercial 4.0.

\textbf{Training details}: We train all the models using the \texttt{Trainer} module provided by Huggingface \cite{wolf-etal-2020-transformers} with Adam optimizer.
For fine-tuning the BART-based generation models, we use a learning rate of 5e-5 and a total batch size of 128 and train for 3 epochs. Note that in our train dataset, $T$ is pretty balanced, so we treat all $\frac{n}{u_i}=3$ and we merge this constant into the learning rate of the Implicit CMLE method (see Equation \ref{eq:icmle}). For fine-tuning the RoBERTa based classification models, we adopt the same setting as \cite{nie-etal-2020-adversarial}, that is we use a learning rate of 1e-5 and a total batch size of 128 and train for 2 epochs. We conduct all our experiments on NVIDIA Titan RTX GPUs (24GB), except fine-tuning BART using the Explicit CMLE method, which is trained on NVIDIA Tesla V100 GPUs (16GB). For each experiment, we parallelize our data across four GPUs. For fine-tuning BART, MLE and Implicit CMLE takes about 13 hours to train on Titan RTX GPUs, while Explicit CMLE takes about 8 hours to train on Tesla V100 GPUs (would take a much longer time on Titan RTX GPUs). For fine-tuning RoBERTa, the augmentation methods take about 12 hours to train while the un-augmented MLE takes about 4 hours to train on NVIDIA Titan RTX GPUs.

\subsection{Image Captioning} \label{app:ic}

\textbf{Dataset}: MSCOCO 2014 dataset is licensed under a Creative Commons Attribution 4.0 License.

\textbf{Training details}: We train all the models using the image captioning codebase provided by \cite{luo2018discriminability} with Adam optimizer. Since we train all the models from scratch, for CMLE methods, we first train the Transformer model with the normal MLE objective for the first 3 epochs and then switch to the CMLE objective to stabilize the training. For all methods, we use a learning rate of 1e-4 and a total batch size of 100 and train for 25 epochs. Note that we force the balance of $T$ among each batch, so we treat all $\frac{n}{u_i}=2$ and we merge this constant into the learning rate of the Implicit CMLE method (see Equation \ref{eq:icmle}). We conduct all our experiments on NVIDIA Titan RTX GPUs (24GB). For augmented methods, MLE and Implicit CMLE take about 40 hours to train on a single GPU while the Explicit CMLE takes about 80 hours to train on two GPUs. The un-augmented MLE takes about 20 hours to train on a single GPU.

\subsection{Mechanical Turk Details}\label{app:mturk}

We conduct all of our human evaluations and user studies via Amazon Mechanical Turk. Our studies only include simple multiple-choice problems that do not involve collecting any personal information. We also first manually check the examples to make sure there are no offensive contents. We are happy to provide our IRB approval document upon request.

\subsubsection{Assessing Generated NLI Hypothesis Quality}

\begin{figure}[t]
    \centering
    \includegraphics[width=\textwidth]{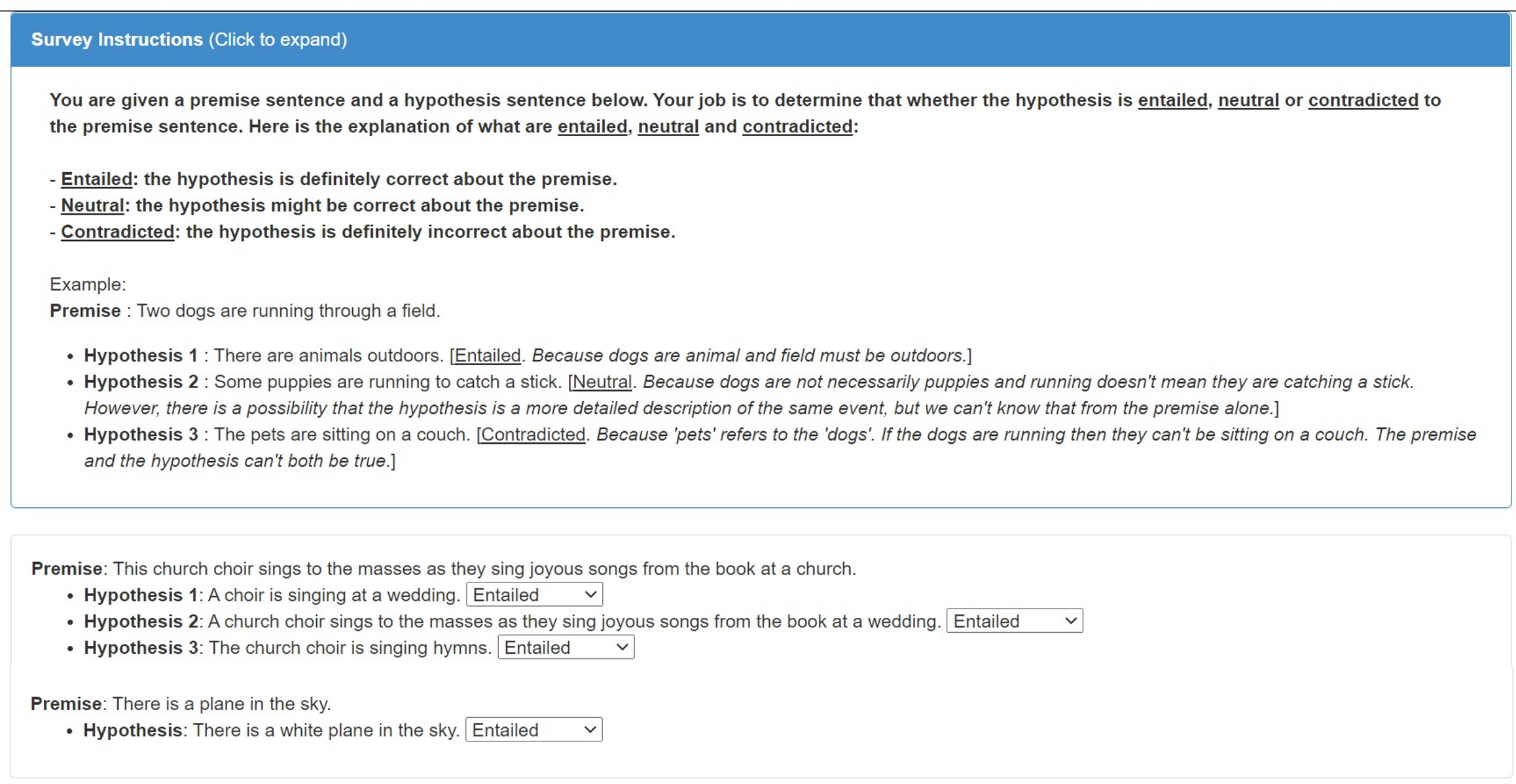}
    \caption{Natural language inference human evaluation interface.}
    \label{fig:nli_mturk}
\end{figure}

We choose 200 premises with corresponding generated hypothesis from each of the three methods, \textit{MLE (w/ Aug)}, \textit{Implicit CMLE} and \textit{Explicit CMLE}, such that each hypothesis is different from each other. We produce Human Intelligence Tasks (HITs) as follows:

At the top of the HIT page, the crowdworker is shown the following instructions:

\textit{You are given a premise sentence and a hypothesis sentence below. Your job is to determine that whether the hypothesis is entailed, neutral or contradicted to the premise sentence. Here is the explanation of what are \underline{entailed}, \underline{neutral} and \underline{contradicted}:}
\begin{itemize}
    \item \textit{\underline{Entailed}: the hypothesis is definitely correct about the premise.} 
    \item \textit{\underline{Neutral}: the hypothesis might be correct about the premise.} 
    \item \textit{\underline{Contradicted}: the hypothesis is definitely incorrect about the premise.} 
\end{itemize}

Along with the following example and explanations:

\textit{\textbf{Premise}: Two dogs are running through a field.\\
\textbf{Hypothesis 1}: There are animals outdoors. [\underline{Entailed}. Because dogs are animals and the field must be outdoors.] \\
\textbf{Hypothesis 2}: Some puppies are running to catch a stick. [\underline{Neutral}. Because dogs are not necessarily puppies and running doesn't mean they are catching a stick. However, there is a possibility that the hypothesis is a more detailed description of the same event, but we can't know that from the premise alone.] \\
\textbf{Hypothesis 3}: The pets are sitting on a couch. [\underline{Contradicted}. Because 'pets' refers to the 'dogs'. If the dogs are running then they can't be sitting on a couch. The premise and the hypothesis can't both be true.]}

After the instructions, the crowd worker is shown one premise with three different hypotheses and is asked to determine whether each hypothesis is \underline{entailed}, \underline{neutral} and \underline{contradicted} to the given premise. After this question, a simple sanity checking question is also shown to check whether the worker correctly understands the instructions:

\textit{\textbf{Premise}: There is a plane in the sky.\\
\textbf{\underline{Entailed} Hypothesis}: There is an airplane.\\
\textbf{\underline{Neutral} Hypothesis}: There is a white plane in the sky. \\
\textbf{\underline{Contradicted} Hypothesis}: There is no plane in the sky.}

In each HIT, the same premise is shown with one of the three hypotheses at random. We only collect the example with this sanity check question answered correctly. Figure \ref{fig:nli_mturk} is a screenshot of the user interface. 

We paid 0.3 US dollar for each question and we paid \$119 in total for this human evaluation. We estimate the time of answering each question would be approximately 90-120 seconds as the questions involve understanding the logical relationships between the sentences and there is a very detailed instruction to read. So the hourly wage for crowd workers would be \$9 to \$12.

\subsubsection{Assessing Generated Image Caption Quality}

\begin{figure}[t]
    \centering
    \includegraphics[width=\textwidth]{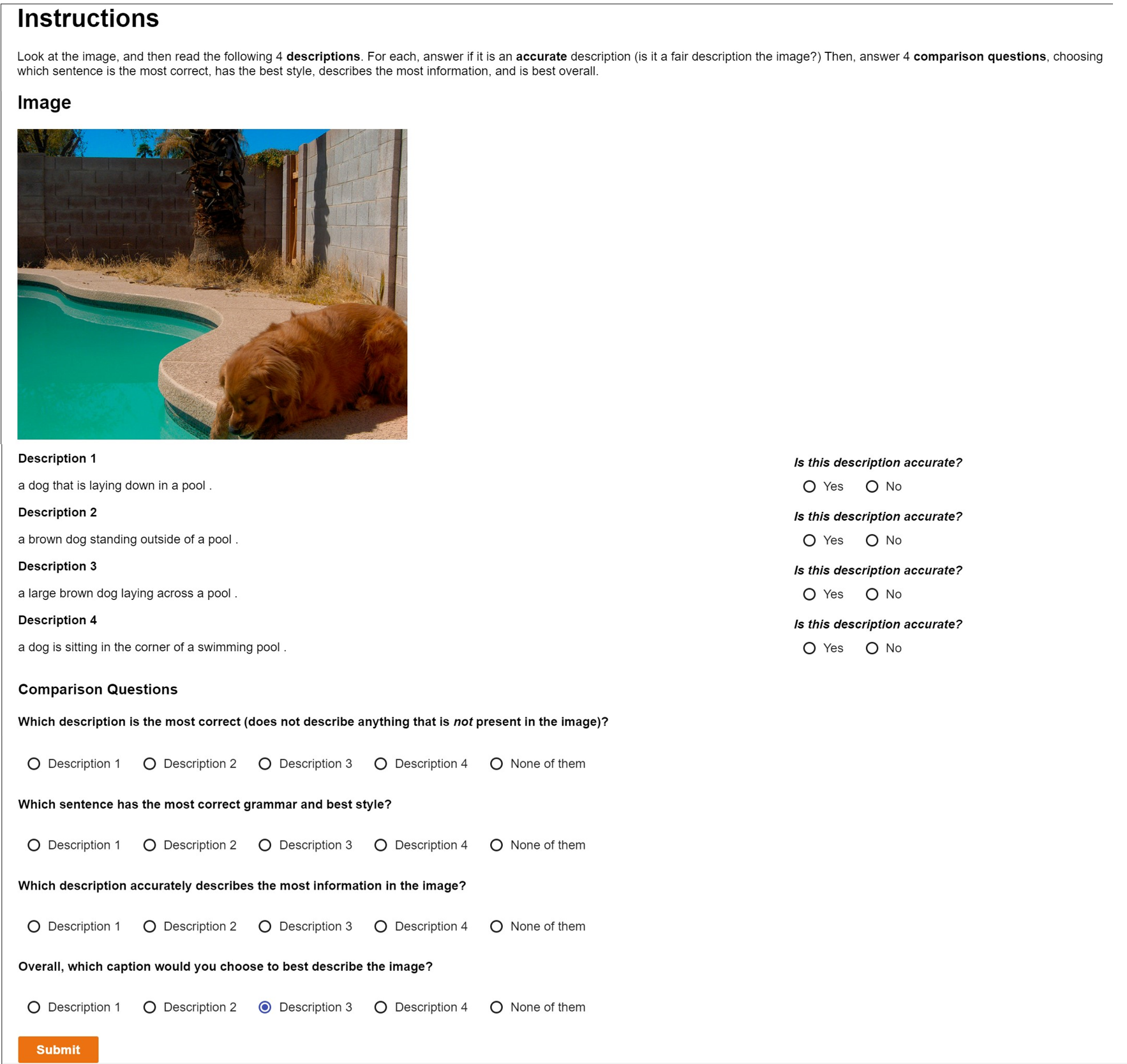}
    \caption{Image captioning user study interface.}
    \label{fig:ic_mturk}
\end{figure}

We choose 150 images with corresponding generated captions from each of the four methods, \textit{MLE}, \textit{MLE (w/ Aug)}, \textit{Implicit CMLE} and \textit{Explicit CMLE}, such that each caption is different from each other. We produce Human Intelligence Tasks (HITs) for as follows:

At the top of the HIT page, the crowd worker is shown the source image, with the instructions ``\textit{Look at the image, then read the following 4 \textbf{descriptions}. For each, answer if it is an \textbf{accurate} description (is it a fair description of the image?) Then, answer 4 \textbf{comparison questions}, choosing which sentence is the most correct, has the best style, describes the most information, and is the best overall.}'' 

In the \textbf{descriptions} section of the HIT page, the crowd worker then sees the four candidate description sentences (in random order) in the left-hand column. Next to each sentence, in a right-hand column, is the question ``\textit{Is this description accurate?}'' and a radio button with which the crowd worker can answer ``\textit{yes}'' or ``\textit{no}''.

They are then shown a set of four \textbf{comparison questions} to rank the captions in terms of \textit{correctness}, \textit{expressiveness}, \textit{informativeness}, and general \textit{preference}. The questions are:

\textbf{Correctness:} \textit{Which description is the most correct (does not describe anything that is \textit{not} present in the image)?}

\textbf{Expressiveness:} \textit{Which sentence has the most correct grammar and best style?}

\textbf{Correctness:} \textit{Which description accurately describes the most information in the image?}

\textbf{Correctness:} \textit{Overall, which caption would you choose to best describe the image?}

For each comparison question, the crowdworkers choose between five multiple-choice answers: \textit{Description 1}, \textit{Description 2}, \textit{Description 3}, \textit{Description 4}, or \textit{None of them}.

Each HIT is shown to a total of 7 crowd workers. To produce a human assessment of the \textbf{accuracy} of a given caption generation technique, we simply count the rate at which all respondents answer ``\textit{yes}'' to the ``\textit{Is this description accurate}'' questions for each model's generated examples. To assess the \textbf{faithfulness}, \textbf{expressiveness}, \textbf{informativeness}, and \textbf{preference}, we choose the ``best'' generated caption for each comparison using the majority vote of the 7 crowd workers. If the majority choose ``none of them'' or there is a tie, we assign that image as a tie. We then produce each model's comparison question scores by counting the rates at which their captions were rated ``best'' for each question. Figure \ref{fig:ic_mturk} is a screenshot of the user interface.

We paid 0.1 US dollar for each question and we paid \$126 in total for this user study. We estimate the time of answering each question would be approximately 30-40 seconds as the questions do not involve heavy reading or reasoning. So the hourly wage for crowd workers would be \$9 to \$12.

\section{Broader Impact} \label{app:impact}

Techniques for provably reducing the influence of spurious correlations on classification decisions and generative system outputs could be a boon for the reliability of and trust in general public-facing AI systems. Considering that these spurious correlations driven by observable confounders are present across a wide array of datasets and tasks, the techniques we propose herein could be broadly applicable for general system reliability improvements in supervised learning settings.

As for risks, we estimate that our proposed methods are not particularly rife for abuse, and pose a low level of danger broadly comparable to other innovations in optimizers, loss functions, and neural network architectures. We believe there is not a significant risk of any code or data produced for this study being used to nefarious ends. A possible risk to the environment is induced by the use of large pre-trained models, which would consume a large amount of computing resources at training. This issue can be mitigated by adopting more efficient training algorithms and compress the number of trainable parameters with a small trade-off in performance.

\end{document}